\definecolor{lightgray}{gray}{0.8}
\theoremstyle{plain}
\newtheorem{theorem}{Theorem}[section]
\newtheorem{lemma}[theorem]{Lemma}
\theoremstyle{definition}
\newtheorem{assumption}[theorem]{Assumption}
\theoremstyle{remark}
\newcommand{\E}{\mathbb{E}}
\newcommand{\R}{\mathbb{R}}
\newcommand{\N}{\mathbb{N}}
\newcommand{\prob}{\mathbb{P}}
\newcommand*\diff{\mathop{}\!\mathrm{d}}
\newcommand*\circledred[1]{%
\tikz[baseline=(char.base)]{
  \node[shape=circle, draw=BrickRed!60, fill=BrickRed!10, thick, inner sep=1pt] (char) {\scriptsize\textsf{#1}};
}}
\newcommand*\circledblue[1]{%
\tikz[baseline=(char.base)]{
  \node[shape=circle, draw=NavyBlue!60, fill=NavyBlue!10, thick, inner sep=1pt] (char) {\scriptsize\textsf{#1}};
}}
\newcommand{\greencheck}{\textcolor{ForestGreen}{\checkmark}}
\newcommand{\redcross}{\textcolor{BrickRed}{\ding{55}}}
\def\maketag@@@#1{\hbox{\m@th\normalfont\normalsize#1}}
\title{Orthogonal Survival Learners for Estimating Heterogeneous Treatment Effects from Time-to-Event Data}
\author{%
  Dennis Frauen\thanks{Equal contribution.},\;  
  Maresa Schröder\footnotemark[1],\;            
  Konstantin Hess,\; Stefan Feuerriegel\\
  LMU Munich\\
  Munich Center of Machine Learning (MCML)\\
  \texttt{\{frauen, maresa.schroeder, k.hess, feuerriegel\}@lmu.de}
}
\begin{document}

\maketitle

\begin{abstract}
Estimating heterogeneous treatment effects (HTEs) is crucial for personalized decision-making. However, this task is challenging in \emph{survival analysis}, which includes time-to-event data with censored outcomes (e.g., due to study dropout). In this paper, we propose a toolbox of novel \emph{orthogonal} survival learners to estimate HTEs from time-to-event data under censoring. Our learners have three main advantages: (i)~we show that learners from our toolbox are guaranteed to be orthogonal and thus come with favorable theoretical properties; (ii)~our toolbox allows for incorporating a custom weighting function, which can lead to robustness against different types of low overlap, and (iii)~our learners are \emph{model-agnostic} (i.e., they can be combined with arbitrary machine learning models). We instantiate the learners from our toolbox using several weighting functions and, as a result, propose various neural orthogonal survival learners. Some of these coincide with existing survival learners (including survival versions of the DR- and R-learner), while others are novel and further robust w.r.t. low overlap regimes specific to the survival setting (i.e., survival overlap and censoring overlap). We then empirically verify the effectiveness of our learners for HTE estimation in different low-overlap regimes through numerical experiments. In sum, we provide practitioners with a large toolbox of learners that can be used for randomized and observational studies with censored time-to-event data.
\end{abstract}

\section{Introduction}\label{sec:intro}


Estimating heterogeneous treatment effects (HTEs) is crucial in personalized medicine \cite{Feuerriegel.2024,Glass.2013}. HTEs quantify the causal effect of a treatment on an outcome (e.g., survival), conditional on individual patient characteristics (e.g., age, gender, prior diseases). This enables clinicians to make individualized treatment decisions aimed at improving patient outcomes. For example, knowing the HTE of an anticancer drug on patient survival can inform treatment decisions that are tailored to a patient's unique medical history, thereby maximizing the probability of survival.


A common setting in medicine is \emph{survival analysis} \cite{Klein.2003, Wiegrebe.2024}. Survival analysis is aimed at modeling medical outcomes, particularly in cancer care, where the dataset involves \emph{time-to-event data} \cite{Zhang.2017}. That is, the outcome of interest is a time to an event that we are interested in maximizing. For example, in cancer care, a treatment should maximize the time until death or tumor progression \cite{Khozin.2019, Singal.2019}. Hence, this requires tailored methods for HTE estimation from time-to-event data. 

The fundamental problem that distinguishes survival analysis from standard causal inference is \emph{censoring} \cite{vanderLaan.2003}. Censoring refers to the phenomenon that some individuals may not have experienced the event (e.g., death, recovery) by the end of the study period. For example, older patients may be more likely to drop out of medical studies \cite{Pitkala.2022}. Censoring thus requires custom methods to allow for unbiased causal inference. For example, if we simply remove censored individuals from the data, the remaining population may be younger on average, which may lead to biased treatment effect estimates (if the treatment effect is, e.g., different for older patients). As a result, standard methods for HTE estimation (e.g., \cite{Kennedy.2023b, Nie.2021, vanderLaan.2006b}) are \emph{biased} when used for censored time-to-event data. 

In comparison to uncensored data, estimating HTEs from censored time-to-event data is thus subject to three additional \emph{main challenges}: \circledred{1}~\textbf{Complex confounding}: Confounders might not only affect treatment and outcome, but also the event and censoring times. Thus, properly adjusting for these confounders is necessary to obtain valid treatment effect estimates. \circledred{2}~\textbf{Estimation complexity}: Methods that adjust for confounding under censoring require estimating additional \emph{nuisance functions} over multiple time steps, such as hazard functions.
\circledred{3}~\textbf{Different types of overlap}: A necessary requirement for estimating HTEs is sufficient \emph{treatment overlap}, i.e., that every individual has a positive probability of receiving or not receiving the treatment \cite[e.g.,][]{Curth.2021b, Morzywolek.2023}. However, under censoring,  additional overlap conditions are required, which we refer to as \emph{censoring overlap} and \emph{survival overlap}. For example, every individual must have a nonzero probability of being uncensored (i.e., experiencing the event).


Existing methods have limitations because of which they cannot deal with all the above challenges. For example, for uncensored data, state-of-the-art methods for estimating HTEs are \emph{Neyman-orthogonal meta-learners} (such as the DR- and the R-learner) \cite[e.g.,][]{Kennedy.2023b, Nie.2021, vanderLaan.2006b}. These have been extended to the survival setting to address challenges \circledred{1} and \circledred{2} from above, particularly via survival versions of the DR- and R-learners \cite{Cui.2023, Xu.2024}. However, these learners lack the ability to address challenge \circledred{3} of different types of overlap and, as a result, exhibit a large variance under low censoring or survival overlap (e.g., if certain individuals almost never experience the event).

In this paper, we address the above limitations by proposing a novel, general toolbox with Neyman-orthogonal meta-learners for estimating HTEs from censored time-to-event data. Our proposed meta-learners address the challenges from above as follows: \circledred{1}~they use orthogonal censoring adjustments, which enable unbiased and robust estimation under both confounding end censoring; \circledred{2}~they are model-agnostic in the sense that they can be instantiated with any machine learning method (e.g., neural networks) to effectively learn nuisance functions; \circledred{3}~In contrast to existing survival learners, they effectively overcome the difficulty of treatment effect estimation in the presence of lack of any of the three overlap types through targeted re-weighting of the orthogonal losses.

Our \textbf{contributions}\footnote{Code available at \url{https://anonymous.4open.science/r/OrthoSurvLearners_anonymous-EEB1}.} are: (1)~We propose a novel toolbox with orthogonal learners for estimating HTEs from time-to-event data. Our toolbox allows the specification of a custom weighting function for robust estimation under low treatment-, survival-, and/or censoring overlap. We also provide several extensions of our toolbox to different settings (continuous time, marginalized effects, different causal estimands, and unobserved ties) in our Appendix. (2)~We provide theoretical guarantees that learners constructed within our framework are orthogonal and that our learners provide meaningful HTE estimates, no matter of the chosen weighting function. (3)~We instantiate our toolbox for several weighting functions and obtain various novel \emph{orthogonal survival learners}. These learners are model-agnostic and can be used in combination with arbitrary machine learning models (e.g., neural networks).

\section{Related work}\label{sec:rw}

Below, we review key works aimed at orthogonal learning for HTES, especially for censored data. We provide an extended literature review in Appendix~\ref{app:rw}.

\begin{wraptable}{r}{0.55\textwidth}
\vspace{-0.3cm}
\caption{\textbf{Overview of key orthogonal learners} whether they can adjust for censoring / different types of overlap.}
\label{tab:rw}
\resizebox{0.55\textwidth}{!}{
\begin{tabular}{lcccc}
\toprule
Learner &   Censoring &  Treat. overlap &  Cens. overlap &  Surv. overlap \\
\midrule
DR-learner~\cite{vanderLaan.2006b, Kennedy.2023b} & \redcross &  \redcross &  \redcross &  \redcross  \\
R-learner~\cite{Nie.2021} &    \redcross &  \greencheck &  \redcross &  \redcross \\
\midrule
Survival-DR-Learner~\cite{Morzywolek.2023, Xu.2024} &  \greencheck & \redcross & \redcross &  \redcross  \\
Survival-R-Learner~\cite{Xu.2024} &   \greencheck &  \greencheck &  \redcross &  \redcross \\
\midrule
\textbf{Ours} & \greencheck &  \greencheck &  \greencheck &  \greencheck  \\
\bottomrule
\end{tabular}}
\vspace{-0.4cm}
\end{wraptable}

\textbf{Orthogonal learning of HTEs from uncensored data.} 
Several meta-learners for HTE estimation have been introduced in the literature, particularly for conditional average treatment effects \cite[e.g.,][]{Curth.2021,Kunzel.2019}. Among them, the DR-learner \cite{Kennedy.2023b,vanderLaan.2006b} and the R-learner~\cite{Nie.2021} are often regarded as state-of-the-art because these are \emph{orthogonal}, meaning they are based upon semiparametric efficiency theory \cite{Bang.2005, vanderLaan.2006} and robust with respect to nuisance estimation errors \cite{Chernozhukov.2018}. Furthermore, orthogonality typically implies other favorable theoretical properties, such as quasi-oracle efficiency and doubly robust convergence rates \cite{Foster.2023}. Recently, \cite{Morzywolek.2023} showed that the R-learner can be interpreted as an overlap-weighted version of the DR-learner, thus addressing instabilities and high variance in low treatment-overlap scenarios.

Orthogonal learners have also been proposed for other causal quantities, such as the conditional average treatment effect on the treated \cite{Lan.2025}, instrumental variable settings \cite{Frauen.2023b, Syrgkanis.2019}, HTEs over time \cite{Frauen.2025}, partial identification bounds \cite{Oprescu.2023, Schweisthal.2024}, or uncertainty quantification of treatment effect estimates \cite{Alaa.2023}. However, \emph{none} of these learners are tailored to time-to-event data, and, hence, they are \emph{biased} under censoring.

\textbf{Model-based learning of HTEs from censored data.} 
There is some literature for estimating HTEs from time-to-event data that has focused on \emph{model-based learners}, i.e., learners based on specific machine learning models \cite{Hu.2021b}. Examples include tree-based learners \cite[e.g.,][]{Cui.2023, Henderson.2020, Tabib.2020, Zhang.2017} or neural-network-based learners \cite{Curth.2021b, Katzman.2018, Schrod.2022}. Note that these learners are neither model-agnostic (i.e., cannot be used with arbitrary machine learning models) \underline{nor} orthogonal. Furthermore, model-based learners typically estimate the target HTE via a plug-in fashion and thus suffer from so-called \emph{plug-in bias} \cite{Kennedy.2022}. In contrast, (model-agnostic) orthogonal learners remove plug-in bias by fitting a second model based on a Neyman-orthogonal second-stage loss. Nevertheless, model-based learners can be combined with model-agnostic orthogonal learners for the first stage (nuisance estimation).

\textbf{Orthogonal learning of HTEs from censored data.}
Few works have proposed (orthogonal) meta-learners tailored to censored time-to-event data. Xu et al.~\cite{Xu.2023b} introduce an adaptation of existing learners to time-to-event data based on inverse probability of censoring weighting \cite{Kohler.2002, vanderLaan.2003}. However, the proposed learner is \emph{only} applicable to experimental data from randomized controlled trials and is sensitive to overlap violations. Gao et al.~\cite{Gao.2022} propose orthogonal learners based on exponential family and Cox models, but \emph{not} neural networks. Xu et al.~\cite{Xu.2024} develop censoring unbiased transformations for survival outcomes; i.e., to convert time-to-event outcomes to standard continuous outcomes, which can then be combined with existing orthogonal learners for estimating HTEs in the standard setting. However, the corresponding survival versions of the DR-learner and R-learner are \emph{not} robust against survival or censoring overlap violations.

\textbf{Research gap:} So far, existing orthogonal survival learners fail to account for different types of overlap violations (see Table~\ref{tab:rw}). To the best of our knowledge, we are thus the first to provide a general toolbox that includes custom weighting functions to ensure robustness against different types of overlap violations (such as survival or censoring overlap violations).

\section{Problem setup}\label{sec:prob_setup}
\begin{wrapfigure}{r}{0.3\textwidth}
  \centering
  \vspace{-1.4cm}
  \includegraphics[width=1\linewidth]{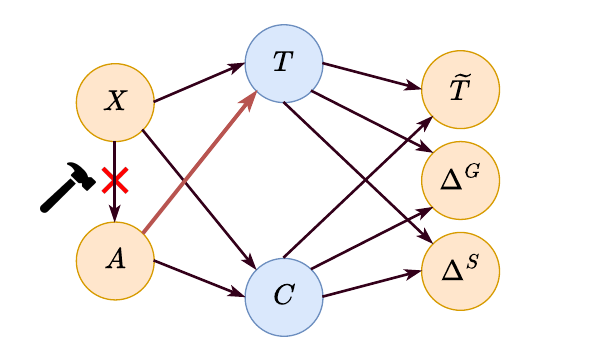}
  \vspace{-0.5cm}
  \caption{\textbf{Causal graph for censored time-to-event data.} Yellow variables are observed while blue variables are unobserved. Intuitively, our goal is to recover the red arrow from $A$ to $T$ based on the observed variables.}
  \label{fig:causal_graph}
  \vspace{-0.9cm}
\end{wrapfigure}
\textbf{Data:} We consider the standard setting for estimating HTEs from time-to-event data \cite{Curth.2021b, Cui.2023}. 
That is, we consider a population $(X, A, T, C)\sim \mathbb{P}$, where $X \in \mathcal{X} \subseteq \R^p$ are observed covariates, $A \in \{0, 1\}$ is a binary treatment, $T \in \mathcal{T}$ event time of interest (e.g., death of the patient), and $C \in \mathcal{T}$ is the censoring time at which the patient drops out of the study. To ease notation, we assume a discrete-time setting $\mathcal{T} = \{0,\dots, t_\mathrm{max}\}$ throughout the main part of this paper. However, all our findings can be easily extended to continuous time, and we provide the corresponding results in Appendix~\ref{app:continuous_time}.

The challenge of time-to-event data is that we cannot collect data from the full population $(X, A, T, C)$. Instead, we only observe a dataset $\mathcal{D} = \{(x_i, a_i, \widetilde{t}_i, \delta^S_i, \delta^G_i)_{i=1}^n\}$ of size $n \in \N$ sampled i.i.d. from the population $Z = (X, A, \widetilde{T}, \Delta^S, \Delta^G)$, where $\widetilde{T} = \min \{ T, C \}$, the indicator $\Delta^S = \mathbf{1}(T \leq C)$ equals one when the event of interest (e.g., death/recovery) is observed, and $\Delta^G = \mathbf{1}(T \geq C)$  equals one in the censoring case (study dropout).\footnote{Related works often set $\Delta^G = 1 - \Delta^S$, thus excluding ties (see Appendix~\ref{app:ties}).} In other words, for every patient, we know only the time $\widetilde{T}$ and whether the patient experienced the main event or censoring. A causal graph is shown in Fig.~\ref{fig:causal_graph}.

\textbf{Key definitions:} We define the (conditional) \emph{survival functions} $S_t(x,a) = \mathbb{P}(T > t \mid X = x, A = a)$ and $G_t(x,a) = \mathbb{P}(C > t \mid X = x, A = a)$ of the main event and of the censoring event, respectively. We further define the \emph{hazard functions} $\lambda^S_t(x, a) = \mathbb{P}(\widetilde{T} = t, \Delta^S = 1 \mid \widetilde{T} \geq t, X = x, A = a)$ and $\lambda^G_t(x, a) = \mathbb{P}(\widetilde{T}= t, \Delta^G = 1 \mid \widetilde{T} \geq t, X = x, A = a)$ of the main event of interest and censoring, respectively. The survival hazard function $\lambda^S_t(x, a)$ denotes the probability that a patient with covariates $x$ and treatment $a$ experiences the main event (e.g., death) at time $t$, given survival up to time $t$. Analogously, the censoring hazard function denotes the probability that the same patient drops out at time $t$, given no prior dropout. Finally, we define the \emph{propensity score} as $\pi(x) = \mathbb{P}(A = 1 \mid X = x)$, which represents the treatment assignment mechanism based on $X=x$.

\textbf{Causal estimand:} We use the potential outcomes framework  \cite{Rubin.1974} to formalize our causal inference problem. Let $T(a) \in \mathcal{T}$ denote the potential event time corresponding to a treatment intervention $A = a$. We are interested in the causal estimand
\begin{equation}
    \tau_t(x) = \mathbb{P}(T(1) > t \mid X = x) - \mathbb{P}(T(0) > t \mid X = x)
\end{equation}
for some fixed $t \in \mathcal{T}$. The estimand $\tau_t(x)$ is the difference in survival probability up to time $t$ for a patient with covariates $x$. We also provide extensions of all our results to conditional means $\Bar{\tau}(x) = \mathbb{E}[T(1) - T(0) \mid X = x]$ as well as treatment-specific quantities such as $\mu_t(x, a) = \mathbb{P}(T(a) > t \mid X = x)$ and $\Bar{\mu}(x, a) = \mathbb{E}[T(a) \mid X = x]$ in Appendix~\ref{app:estimands}.

\textbf{Identifiability:} We impose the following assumptions to ensure the identifiability of $\tau_t(x)$.

\begin{assumption}[Standard causal inference assumptions]\label{ass:causal} For all $a \in \{0,1\}$ and $x \in \mathcal{X}$ it holds: (i)~\emph{consistency}: $T(a) = T$ whenever $A = a$; (ii)~\emph{treatment overlap}: $0 < \pi(x) < 1$ whenever $\prob(X=x)>0$; and (iii)~\emph{ignorability}: $A\perp T(1), T(0)\mid X=x$.
\end{assumption}

\begin{assumption}[Survival-specific assumptions]\label{ass:survival}
For all $a \in \{0,1\}$ and $x \in \mathcal{X}$ with $\prob(X=x, A=a)>0$ it holds: (i)~\emph{censoring overlap}: $G_{t-1}(x, a) > 0$; (ii)~\emph{survival overlap}: $S_{t-1}(x, a) > 0$; and (iii)~\emph{non-informative censoring}: $T \perp C \mid X=x, A = a$.
\end{assumption}
Assumption~\ref{ass:causal} is standard in causal inference \cite{Imbens.1994, Robins.1999, vanderLaan.2006} and ensures that there is (i)~no interference between individuals, (ii)~we have sufficient observed treatments for all covariate values, and (iii)~there are no unobserved confounders that can bias our estimation. Assumption~\ref{ass:survival} is commonly imposed for survival analysis \cite{Cai.2020, Westling.2023} and ensures that we have sufficient non-censored and surviving individuals for each covariate value and that the censoring mechanism is independent of a patient's survival time (given covariates and treatments).
Under Assumptions~\ref{ass:causal} and \ref{ass:survival}, we can identify $\tau_t(x)$ via
\begin{equation}\label{eq:identification}
\tau_t(x) = S_t(x, 1) -  S_t(x, 0) = \prod_{i =0}^t (1 - \lambda^S_i(x, 1)) - \prod_{i =0}^t (1 - \lambda^S_i(x, 0))
\end{equation}
We provide a proof in Appendix~\ref{app:proofs}. Note that the hazard functions $\lambda^S_i(x, a)$ only depend on the observed population $Z = (X, A, \widetilde{T}, \Delta^S, \Delta^G)$ and can therefore be estimated from the data $\mathcal{D}$.

\textbf{Challenges in survival analysis:} In classical causal inference, a major challenge is covariate shift, meaning a strong correlation of observed confounders with the treatment \cite{Shalit.2017}. For example, specific patients may almost always receive treatments, while others almost never receive treatment. This leads to the problem of low overlap, i.e., an extreme propensity score $\pi(x)$, and thus a lack of data for specific patients with certain covariate values.

In survival analysis, the \emph{censoring mechanism adds two additional sources of data scarcity}: (i)~if $G_{t-1}(x, a)$ is small, certain patients have a low probability of being uncensored beyond time $t$, implying a lack of uncensored observations (\emph{low censoring overlap}). Similarly, if $S_{t-1}(x, a)$ is small, most patients experience the main event before time $t$, leaving few data to estimate the hazard function $\lambda^S_t$ (\emph{low survival overlap}). Existing learners for estimating HTEs from time-to-event data have not yet addressed challenges due to additional types of low overlap. In the following section, we provide a remedy to these challenges by proposing a general orthogonal learning framework that can incorporate custom weighting functions to address the different types of low overlap.

\section{Background on orthogonal learning}\label{sec:motivation}

\textbf{Why plug-in learners are problematic:}  A straightforward method to obtain an estimator is the so-called \emph{plugin-learner}. Here, we first obtain estimates of the survival hazards $\hat{\lambda}^S_i(x, a)$. We discuss methods for this in Appendix~\ref{app:nuisance}. Then, we can obtain an estimator of our causal quantity of interest via $\hat{\tau}_t(x) = \hat{S}_t(x, 1) -  \hat{S}_t(x, 0)$, where $\hat{S}_t(x, a) = \prod_{i = 0}^t (1 - \hat{\lambda}^S_i(x, a))$. That is, the approach is to ``plug-in'' the estimated hazards into the identification formula from Eq.~\eqref{eq:identification}. However, it is well known in the literature that such plug-in approaches lead to so-called \emph{``plug-in bias''} and, thus, \emph{suboptimal} estimation \cite{Kennedy.2022}. For details, we refer to Appendix~\ref{app:background}. 

\textbf{Why we develop two-stage learners:} As a remedy to plug-in bias, current state-of-the-art methods for HTE estimation are built upon \emph{two-stage estimation}: First, so-called nuisance functions $\eta_t$ are estimated, which are components of the data-generating process that we will define later. Then, a second-stage learner is trained via
\begin{equation}
\hat{\tau}_t(x) = \arg\min_{g \in \mathcal{G}} \mathcal{L}(g, \hat{\eta}_t),
\end{equation}
where $\mathcal{L}(g, \hat{\eta}_t)$ is some second-stage loss that depends on the estimated nuisance functions $\hat{\eta}_t$. Two-stage learners come with two main advantages: (i)~they allow to estimate the causal estimand directly, thus increasing statistical efficiency; and (ii)~they allow to choose a model class $\mathcal{G}$ of the causal estimand. For example, $\hat{\tau}_t(x)$ can be directly regularized, or interpretable models such as decision trees can be used.

\textbf{The benefit of orthogonal loss functions:} The current state-off-the-art for designing second-stage loss functions are so-called (Neyman-)orthogonal loss functions \cite{Chernozhukov.2018}. Formally, a second-stage loss $\mathcal{L}(g, \eta_t)$ is orthogonal if
\begin{equation}
D_{\eta_t}D_g \mathcal{L}(g, \eta_t)[\hat{g} - g, \hat{\eta}_t - \eta_t] = 0,
\end{equation}
for any $\hat{g}$ and $\hat{\eta}_t$, where $D_{\eta_t}$ and $D_g$ denote directional derivatives \cite{Foster.2023}. Informally, orthogonality implies the gradient of the loss w.r.t. $g$ is insensitive to small estimation errors in the nuisance functions. This robustness w.r.t. nuisance errors often enables favorable theoretical properties of orthogonal learners, such as quasi-oracle convergence rates \cite{Foster.2023, Nie.2021}.

In the following, we carefully derive novel orthogonal losses for the survival setting, which are currently missing in the literature. As a result, our learners allows us to address not only a lack of data due to treatment overlap issues but also due to low censoring overlap and survival overlap.

\section{A general toolbox for obtaining orthogonal survival learners}
\label{sec:toolbox}

In this section, we provide our general recipe and theory for constructing orthogonal survival learners that can be used for estimating HTEs from time-to-event data. We propose concrete learners that retarget for different overlap types in Sec.~\ref{sec:learners}.

\begin{wrapfigure}{r}{0.6\textwidth}
  \centering
  \vspace{-0.8cm}
  \includegraphics[width=1\linewidth]{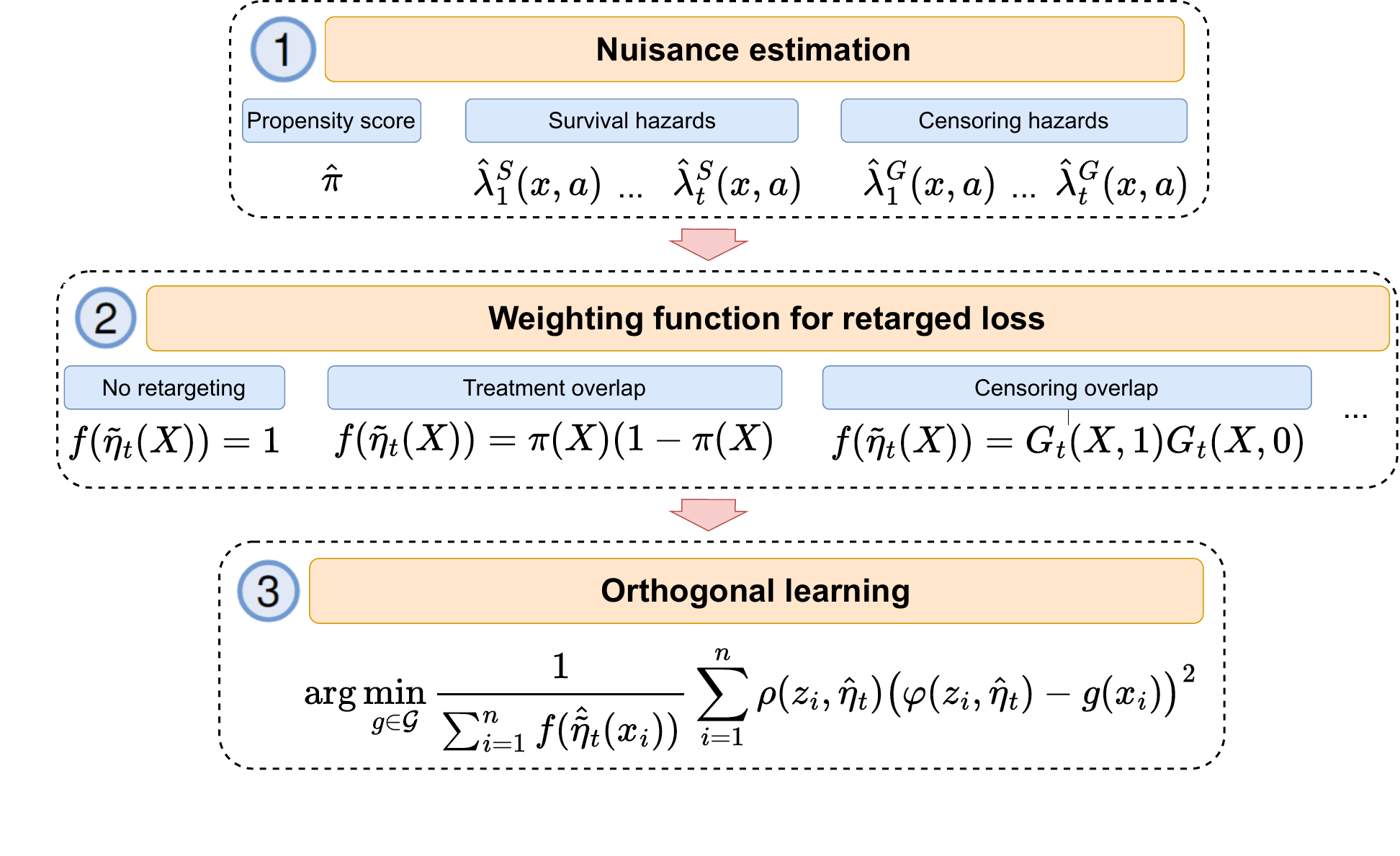}
    \vspace{-0.9cm}
  \caption{\textbf{Overview of the three steps of our toolbox.}}
  \label{fig:overview}
  \vspace{-0.6cm}
\end{wrapfigure}

\textbf{Overview:} Our toolbox for orthogonal survival learning proceeds in three steps: \circledblue{1}~We fit \emph{nuisance models} that estimate the nuisance functions $\eta_t$. \circledblue{2}~We select a \emph{weighting function} and a corresponding \emph{weighted target loss} that addresses a certain type of overlap violation. \circledblue{3}~We obtain an \emph{orthogonalized version} of the weighted target loss that we use to fit an orthogonal second-stage learner. An overview of our toolbox is shown in Fig.~\ref{fig:overview}. In contrast to existing work, our step \circledblue{2} addresses overlap violations beyond treatment overlap within our orthogonal learning framework.

\circledblue{1}~\textbf{Nuisance estimation.} In step 1, we estimate the relevant nuisance functions for the survival setting. These include the propensity score $\pi(X)$, as well as the survival and censoring hazards $\lambda^S_t(x, a)$ and $\lambda^G_t(x, a)$ for each time point up to $t$. Formally, we define:
\begin{equation}
\eta_t(X) = \left( \pi(X), \left( \lambda^S_i(X, 1), \lambda^S_i(X, 0), \lambda^G_i(X, 1), \lambda^G_i(X, 0) \right)_{i=0}^t \right) 
\end{equation}
Each of these nuisance functions can be estimated from the data using arbitrary machine learning methods. Details for estimating these functions are provided in Appendix~\ref{app:nuisance}.

\circledblue{2}~\textbf{Weighted target loss.} Once nuisance estimators $\hat{\eta}_t(X)$ are obtained, the second step is to define a target loss that incorporates a weighting function designed to address overlap violations. Specifically, we consider a positive weighting function $f(\widetilde{\eta}_t(X)) > 0$ that depends on $\widetilde{\eta}_t(X) = (\pi(X), S_{t-1}(X, 1), S_{t-1}(X, 0), G_{t-1}(X, 1), G_{t-1}(X, 0))$ (i.e., the propensity score, the survival and censoring functions at time $t-1$). The corresponding \emph{weighted target loss} is defined as
\begin{equation}\label{eq:weighted_target_loss}
  \Bar{\mathcal{L}}_{f}(g, \eta_t) =  \frac{1}{\E[f(\widetilde{\eta}_t(X))]} \E\left[f(\widetilde{\eta}_t(X))\left(\tau_t(X) - g(X) \right)^2\right].
\end{equation}
The weighted target loss represents the population loss that we aim to minimize. Note that the minimizer of $\Bar{\mathcal{L}}_{f}(g, \eta_t)$ over a function class $\mathcal{G}$ coincides with our target quantity $\tau_t(X)$ whenever $\tau_t(X) \in \mathcal{G}$, i.e., whenever our second-stage model class $\mathcal{G}$ is sufficiently complex. This holds no matter what weighting function we use as long as we ensure that $f(\widetilde{\eta}_t(X)) > 0$.

The weighting function $f(\widetilde{\eta}_t(X)) > 0$ allows us to \emph{retarget} our loss towards a more favorable population. For example, by choosing $f(\widetilde{\eta}_t(X)) = \pi(X) (1 - \pi(X)$, we downweight samples with low treatment overlap, and thus retarget to the population with large treatment overlap. This allows us to prioritize estimation accuracy in regions with larger treatment overlap. Similar retargeting has been used in \cite{Kallus.2021b} for policy learning and in \cite{Morzywolek.2023} for CATE estimation from uncensored data. Yet, we are the first to explicitly use retargeting for HTE estimation from time-to-event data. 

\emph{How to choose the weighting function?} The different types of overlap can be estimated from the data at hand by estimating $\pi$, $S_{t-1}$, and $G_{t-1}$. Based on the estimated overlap types practitioners are able to choose a weighting function that addresses the specific overlap challenges of their setting at hand. We discuss the specific weighting functions and, thus, corresponding learners in Sec.~\ref{sec:learners}.

\circledblue{3}~\textbf{Orthogonal second-stage loss.} In the final step 3, we obtain a Neyman-orthogonal version of our weighted target loss that we use for the second-stage regression. Here, we follow a similar approach as in \cite{Morzywolek.2023}, who derived weighted orthogonal learners for CATE estimation but from \underline{un}censored data. 

\emph{How to choose such an orthogonal loss?} First, we define a corresponding averaged weighted estimand as
$\theta_{t, f} =\E[f(\widetilde{\eta}_t(X)) \tau_t(X)] / {\E[f(\widetilde{\eta}_t(X))]}$.
A natural candidate for an orthogonal loss is based on the so-called \emph{efficient influence function} (EIF) $\phi_{t, f}(Z, \eta_t)$ of the parameter $\theta_{t, f}$ (see Appendix~\ref{app:background} for a more detailed background on EIFs). Further, a well-known result from semiparametric estimation theory states that the EIF satisfies the property $D_{\eta_t}\phi_{t, f}(Z, \eta_t)[ \hat{\eta}_t - \eta_t]$, i.e., its directional derivatives w.r.t. the nuisance functions are zero \cite{Chernozhukov.2018, vanderLaan.2003}. Hence, it remains to find a loss whose directional derivative w.r.t. $g$ equals to the EIF. By deriving the EIF of $\theta_{t, f}$, we obtain the following main result.

\begin{theorem}\label{thrm:orthogonal} We define the (population) loss function 
\small
\begin{equation}\label{eq:orthogonal_loss}
\mathcal{L}_{f}(g, \eta_t) = \frac{1}{\E[f(\widetilde{\eta}_t(X))]} \E\left[\rho(Z, \eta_t)\left(\varphi(Z, \eta_t) - g(X) \right)^2\right],
\end{equation}
\normalsize
where
\small
\begin{align}
   & \rho(Z, \eta_t) =  f(\widetilde{\eta}_t(X)) + \frac{\partial f}{\partial \pi}(\widetilde{\eta}_t(X)) (A - \pi(X)) \\
     &- \left(\frac{A}{\pi(X)}+\frac{1-A}{1-\pi(X)} \right) \left(\frac{\partial f(\widetilde{\eta}_t(X))}{\partial S_{t-1}(\cdot, A) }
    S_{t-1}(X, A) \xi_S(Z, \eta_{t-1})+ \frac{\partial f(\widetilde{\eta}_t(X))}{\partial G_{t-1}(\cdot, A) }G_{t-1}(X, A) \xi_G(Z, \eta_{t-1}) \right)
    \nonumber
\end{align}
\normalsize
and
\small
\begin{equation}
   \varphi(Z, \eta_t) =  S_t(X, 1) - S_t(X, 0) - \frac{(A - \pi(X))\xi_S(Z, \eta_t)S_t(X, A) f(\widetilde{\eta}_t(X))}{\pi(X) (1 - \pi(X))\rho(Z, \eta_t)}
\end{equation}
\normalsize
with
\tiny
\begin{align}\label{eq:eif_component_s}
  \xi_S(Z, \eta_t) &= \sum_{i=0}^t \frac{\mathbf{1}(\widetilde{T} = i, \Delta^S = 1) - \mathbf{1}(\widetilde{T}\geq i) \lambda^S_{i}(X, A)}{S_i(X, A)G_{i-1}(X, A)}, \quad
  \xi_G(Z, \eta_{t-1}) = \sum_{i=0}^{t-1} \frac{\mathbf{1}(\widetilde{T} = i, \Delta^G = 1) - \mathbf{1}(\widetilde{T}\geq i) \lambda^G_{i}(X, A)}{S_{i-1}(X, A)G_{i}(X, A)},
\end{align}
\normalsize
and where we used the convention $S_{-1}(x,a) = G_{-1}(x, a) = 1$.
Then, $\mathcal{L}_{f}(g, \eta_t)$ is orthogonal with respect to the nuisance functions $\eta_t$.
\end{theorem}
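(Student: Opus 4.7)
The plan is to verify orthogonality by showing that $\mathcal{L}_f$ is the squared loss built from the efficient influence function (EIF) of the scalar weighted parameter $\theta_{t,f}$ and then invoking the standard fact that such losses are Neyman-orthogonal. Concretely, I would proceed in three steps: (i)~compute the EIF $\phi_{t,f}(Z,\eta_t)$ of $\theta_{t,f}$; (ii)~identify it in the form $\phi_{t,f}(Z,\eta_t) = \rho(Z,\eta_t)(\varphi(Z,\eta_t) - \tau_t(X))/\E[f(\widetilde{\eta}_t(X))]$ for the specific $\rho$ and $\varphi$ in the statement; and (iii)~verify that the mixed Gateaux derivative $D_{\eta_t}D_g\mathcal{L}_f[\cdot,\cdot]$ vanishes at the true $(\tau_t,\eta_t)$.

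For step~(i), I would apply the quotient rule to split the EIF of $\theta_{t,f} = \E[f\tau_t]/\E[f]$ into numerator and denominator contributions, then apply the chain rule on the numerator $\E[f(\widetilde{\eta}_t(X))(S_t(X,1) - S_t(X,0))]$. The chain rule pulls in three building blocks: (a)~the EIF of the counterfactual survival probability $\E[S_t(X,a)]$ under right-censoring, which under Assumption~\ref{ass:survival} produces the doubly robust term $S_t(X,A)\xi_S(Z,\eta_t)$ weighted by $A/\pi(X) - (1-A)/(1-\pi(X))$; (b)~the EIF of $\pi(X)$, contributing the $(\partial f/\partial\pi)(A-\pi(X))$ piece of $\rho$; and (c)~the EIFs of $S_{t-1}(X,A)$ and $G_{t-1}(X,A)$, which via the martingale representation of the counting-process likelihood can be written in terms of $S_{t-1}(X,A)\xi_S(Z,\eta_{t-1})$ and $G_{t-1}(X,A)\xi_G(Z,\eta_{t-1})$ up to inverse-propensity weighting, contributing the remaining $\partial f/\partial S_{t-1}$ and $\partial f/\partial G_{t-1}$ pieces of $\rho$.

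For step~(ii), I would group all EIF contributions that do not multiply the contrast $S_t(X,1)-S_t(X,0)$ into the weight $\rho(Z,\eta_t)$, and factor the remaining piece as $\rho(Z,\eta_t)\varphi(Z,\eta_t)$, where $\varphi(Z,\eta_t)$ equals the plug-in contrast $S_t(X,1)-S_t(X,0)$ corrected by $-(A-\pi(X))\xi_S(Z,\eta_t)S_t(X,A)f(\widetilde{\eta}_t(X))/[\pi(X)(1-\pi(X))\rho(Z,\eta_t)]$, matching the statement exactly. The resulting consistency identity $\E[\rho(Z,\eta_t)(\varphi(Z,\eta_t) - \tau_t(X))\mid X] = 0$ at the true nuisances holds because each EIF contribution has conditional mean zero given $X$: the $(A-\pi(X))$ factors vanish via $\E[A - \pi(X)\mid X]=0$, while the $\xi_S, \xi_G$ terms vanish via the martingale identities $\E[\xi_S\mid X,A] = \E[\xi_G\mid X,A] = 0$, which follow directly from the hazard definitions in Eq.~\eqref{eq:eif_component_s}.

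For step~(iii), a direct computation gives $D_g\mathcal{L}_f(g,\eta_t)[h] = -2\E[\rho(Z,\eta_t)(\varphi(Z,\eta_t) - g(X))h(X)]/\E[f(\widetilde{\eta}_t(X))]$, so the mixed derivative at $g = \tau_t$ is proportional to $D_{\eta_t}\E[\rho(Z,\eta_t)(\varphi(Z,\eta_t) - \tau_t(X))h(X)]|_{\eta_0}$. Orthogonality thus reduces to verifying that the conditional moment $\E[\rho(Z,\eta)(\varphi(Z,\eta) - \tau_t(X))\mid X]$ has vanishing $\eta$-derivative at $\eta_0$. This is a direct calculation in the spirit of the classical DR-learner check: perturbing $\pi$ produces a term that cancels via $\E[A-\pi(X)\mid X]=0$; perturbing $\lambda^S_i$ or $\lambda^G_i$ produces terms that cancel via the martingale properties of $\xi_S$ and $\xi_G$; and the derivatives of $\rho$ through its $f$-dependencies are canceled exactly by the derivatives of $\varphi$, because the pair $(\rho,\varphi)$ was jointly constructed from a single EIF. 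The main obstacle will be the algebraic bookkeeping in steps~(i)-(ii): the chain rule on $f(\widetilde{\eta}_t(X))$ produces five derivative pieces, each multiplied by the EIF of a different nuisance whose propensity- and time-weighting must be tracked carefully, while the counterfactual-survival EIF itself telescopes across $t+1$ time steps inside $\xi_S$. The clean collapse into the $(\rho,\varphi)$ form in the statement relies on the product identity $S_t(X,A) = S_{t-1}(X,A)(1-\lambda^S_t(X,A))$ and on the fact that $\xi_S$ at level $t$ differs from $\xi_S$ at level $t-1$ only by a single mean-zero martingale increment.
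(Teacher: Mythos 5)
Your proposal is correct and follows essentially the same route as the paper: the paper's appendix proof is exactly your step~(iii) — a direct computation of $D_g\mathcal{L}_f$ followed by the mixed directional derivatives w.r.t.\ $\pi$, $\lambda^S_j(\cdot,a)$, and $\lambda^G_j(\cdot,a)$, killed by the martingale identities $\E[\xi_S\mid X,A]=\E[\xi_G\mid X,A]=0$ and $\E[A-\pi(X)\mid X]=0$ together with the chain-rule cancellation between the $\partial f/\partial S_{t-1}$, $\partial f/\partial G_{t-1}$ terms in $\rho$ and the derivative of the correction term in $\varphi$ (via $D_{\lambda^S_j}S_k=-\mathbf{1}(j\le k)S_k/(1-\lambda^S_j)$ and its censoring analogue). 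Your steps~(i)--(ii), deriving the EIF of $\theta_{t,f}$, correspond to the motivation the paper gives in the main text but does not carry out in the proof itself.
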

\begin{proof}
    See Appendix~\ref{app:proofs}.
\end{proof}

Theorem~\ref{thrm:orthogonal} shows that the loss $\mathcal{L}_{f}(g, \eta_t)$ is orthogonal for any choice of weighting function $f$. It remains to show that minimizing $\mathcal{L}_{f}(g, \eta_t)$ actually leads to a meaningful estimator, i.e., we actually obtained an orthogonalized version of our weighted target loss.

\begin{theorem}\label{thrm:minimizer}
    Let $g^\ast_f = \arg\min_{g \in \mathcal{G}} \mathcal{L}_{f}(g, \eta_t)$ be the minimizer of the orthogonal loss from Eq.~\eqref{eq:orthogonal_loss} over a class of functions $\mathcal{G}$. Then, $g^\ast_f$ also minimizes the weighted target loss
    \small
    \begin{equation}
        g^\ast_f = \arg\min_{g \in \mathcal{G}}  \frac{1}{\E[f(\widetilde{\eta}_t(X))]} \E\left[f(\widetilde{\eta}_t(X))\left(\tau_t(X) - g(X) \right)^2\right].
    \end{equation}
\normalsize
Hence, $g^\ast_f = \tau_t$ for any weighting function $f$ as long as $\tau_t \in \mathcal{G}$.
\end{theorem}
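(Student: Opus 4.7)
My plan is to show that $\mathcal{L}_{f}(g,\eta_t)$ equals $\Bar{\mathcal{L}}_{f}(g,\eta_t)$ plus a term that does not depend on $g$. Since $\Bar{\mathcal{L}}_{f}(g,\eta_t)\geq 0$ with equality at $g=\tau_t$, this will immediately force both losses to share the same minimizer over $\mathcal{G}$, and that minimizer coincides with $\tau_t$ whenever $\tau_t\in\mathcal{G}$. First I would expand $(\varphi(Z,\eta_t)-g(X))^2 = (\varphi-\tau_t)^2 - 2(\varphi-\tau_t)(g-\tau_t) + (g-\tau_t)^2$ and divide by $\E[f(\widetilde{\eta}_t(X))]$ to obtain
\[
\mathcal{L}_{f}(g,\eta_t) = \mathcal{L}_{f}(\tau_t,\eta_t) - \frac{2\,\E[\rho(\varphi-\tau_t)(g-\tau_t)]}{\E[f(\widetilde{\eta}_t(X))]} + \frac{\E[\rho(g-\tau_t)^2]}{\E[f(\widetilde{\eta}_t(X))]}.
\]

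\textbf{Two conditional identities.} To close the decomposition, the key is to establish (i) $\E[\rho(Z,\eta_t)\mid X] = f(\widetilde{\eta}_t(X))$, and (ii) $\E[\rho(Z,\eta_t)(\varphi(Z,\eta_t)-\tau_t(X))\mid X] = 0$. Identity (i) should follow by a direct calculation: the score-like term $(A-\pi(X))$ has conditional mean zero given $X$ by definition of $\pi$, and the discrete-time counting-process increments $\xi_S(Z,\eta_t)$ and $\xi_G(Z,\eta_{t-1})$ are martingale differences with $\E[\xi_S\mid X,A]=\E[\xi_G\mid X,A]=0$ under non-informative censoring (Assumption~\ref{ass:survival}(iii)), so every correction term inside $\rho$ vanishes upon conditioning on $X$. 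Identity (ii) is essentially the EIF-unbiasedness property already built into the construction of $\mathcal{L}_{f}$ in Theorem~\ref{thrm:orthogonal}: using $S_t(X,1)-S_t(X,0)=\tau_t(X)$, the augmentation term $-(A-\pi(X))\xi_S(Z,\eta_t) S_t(X,A) f(\widetilde{\eta}_t(X))/[\pi(X)(1-\pi(X))]$ appearing in $\rho\varphi$ has conditional mean zero given $X$ by the same $(A-\pi)$-score and $\xi_S$-martingale cancellations, so that $\E[\rho\varphi\mid X]=\tau_t(X) f(\widetilde{\eta}_t(X))$; subtracting $\tau_t(X)\cdot$(i) then yields (ii).

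\textbf{Conclusion and main obstacle.} With (i) and (ii) in hand, iterated expectations eliminate the cross term, $\E[\rho(\varphi-\tau_t)(g-\tau_t)] = \E[(g(X)-\tau_t(X))\,\E[\rho(\varphi-\tau_t)\mid X]]=0$, and (i) simplifies the quadratic term to
\[
\E[\rho(g-\tau_t)^2] = \E\!\left[(g(X)-\tau_t(X))^2\,\E[\rho\mid X]\right] = \E[f(\widetilde{\eta}_t(X))(g(X)-\tau_t(X))^2].
\]
Substituting back gives $\mathcal{L}_{f}(g,\eta_t) = \mathcal{L}_{f}(\tau_t,\eta_t) + \Bar{\mathcal{L}}_{f}(g,\eta_t)$, so $g^\ast_f = \arg\min_{g\in\mathcal{G}} \Bar{\mathcal{L}}_{f}(g,\eta_t)$, and $g^\ast_f=\tau_t$ as soon as $\tau_t\in\mathcal{G}$. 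The hard part will be identity (ii): it requires pairing each weighting-derivative term appearing in $\rho$ against the augmentation in $\varphi$ and then simultaneously invoking the discrete-time martingale orthogonalities for the hazard increments $\xi_S$ and $\xi_G$. Fortunately, this bookkeeping is precisely the calculation already carried out in the proof of Theorem~\ref{thrm:orthogonal} when verifying that $\mathcal{L}_{f}$ derives from the EIF of $\theta_{t,f}$, so the present statement essentially follows as a corollary of that computation.
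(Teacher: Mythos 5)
Your proposal is correct and follows essentially the same route as the paper's proof: expand the squared residual around $\tau_t(X)$, kill the cross term via $S_t(X,1)-S_t(X,0)=\tau_t(X)$ together with $\E[\xi_S\mid X,A]=\E[\xi_G\mid X,A]=0$ (the paper's Lemma~\ref{lem:app_eif_component_zero}) and $\E[A-\pi(X)\mid X]=0$, and use $\E[\rho(Z,\eta_t)\mid X]=f(\widetilde{\eta}_t(X))$ to reduce the quadratic term to the weighted target loss. Your packaging of the $g$-independent remainder as $\mathcal{L}_f(\tau_t,\eta_t)$ is a cosmetic difference only.
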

\begin{proof}
    See Appendix~\ref{app:proofs}.
\end{proof}

Theorem~\ref{thrm:minimizer} implies that minimizing the orthogonal loss $\mathcal{L}_{f}(g, \eta_t)$ indeed leads to a consistent estimator of the causal estimand of interest no matter what weighting function $f$ we choose (assuming the model class $\mathcal{G}$ is large enough to contain the ground-truth causal estimand). As a result, Theorem~\ref{thrm:orthogonal} and Theorem~\ref{thrm:minimizer} imply together that the loss $\mathcal{L}_{f}(g, \eta_t)$ is exactly what we wanted to derive: \emph{an orthogonalized version of our weighted target loss}. It can be readily used as a second-stage loss for obtaining the causal target parameter by minimizing its corresponding empirical version with estimated nuisance functions from step 1, i.e.,
\small
\begin{equation}\label{eq:second_stage_empirical}
\hat{\tau}_t(x) = \arg\min_{g \in \mathcal{G}} \frac{1}{\sum_{i=1}^n f(\hat{\widetilde{{\eta}}}_t(x_i))} \sum_{i=1}^n\rho(z_i, \hat{\eta}_t)\left(\varphi(z_i, \hat{\eta}_t) - g(x_i) \right)^2. 
\end{equation}
\normalsize

\section{Orthogonal survival learners}\label{sec:learners}

We now explicitly instantiate our toolbox for specific weighting functions and write down the corresponding survival learners we obtain. We show that our toolbox both encompasses existing learners as special cases (Survival-DR- and Survival-R-learner), but also leads to novel learners that address overlap types specific to the survival setting. We use the letters  \texttt{T}/\texttt{C}/\texttt{S} in typewriter font to refer to survival learners addressing specific variants of overlap. We use $\emptyset$ to refer to a learner that does not address any type of overlap. 

\textbf{$\emptyset$-learner (no weighting; also known as  Survival DR-learner \cite{Morzywolek.2023, Xu.2024}):} Here, no weighting is used in the target loss from Eq.~\eqref{eq:weighted_target_loss}, i.e.,  $f(\widetilde{\eta}_t(X)) = 1$. As a consequence, it holds that $\rho(Z, \eta_t) = 1$, and the orthogonal loss becomes
\small
\begin{equation}\label{eq:loss_dr}
\mathcal{L}_\textrm{DR}(g, \eta_t) = \E\left[\left(S_t(X, 1) - S_t(X, 0) + \frac{(A - \pi(X))}{\pi(X) (1 - \pi(X))} (Y(\eta_t) - S_t(X, A)) - g(X) \right)^2\right]
\end{equation}
\normalsize

The drawback of the (Survival)-DR-learner is that it is sensitive to \emph{all} types of low overlap as it includes divisions by $\pi(X)$, $1 - \pi(X)$, $S_i(X, A)$, and $G_i(X, A)$. If one of these quantities is small, the DR-loss becomes unstable, and the learner will exhibit high variance.

\textbf{\texttt{T}-learner\footnote{We denote the Survival-R-learner as \texttt{T}-learner to make the connection to the different weighting schemes explicit.} (treatment overlap; also known as  Survival R-learner) \cite{Xu.2024}:} To address treatment overlap, we choose $f(\widetilde{\eta}_t(X)) = \pi(X) (1 - \pi(X))$. In other words, individuals with small treatment overlap will be down-weighted in the weighted target loss. By noting that $\rho(Z, \eta_t) = (A - \pi(X))^2$, the orthogonal loss becomes

\small
\begin{equation}\label{eq:loss_R}
\mathcal{L}_\textrm{R}(g, \eta_t) = \E\left[\left(\widetilde{Y}(\eta_t) - \widetilde{A}(\eta_t) g(X) \right)^2\right],
\end{equation}
\normalsize
with transformed variables $\widetilde{A}(\eta_t) = A - \pi(X)$ and $\widetilde{Y}(\eta_t) = {Y}(\eta_t) - S_t(X)$ with $S_t(X) = \prob(S > t \mid X) = \pi(X) S_t(X, 1) + (1 - \pi(X))S_t(X, 0)$, as proposed in \cite{Xu.2024}. Compared to the DR-learner, the R-learner does not divide by $\pi(X)$ or $1 - \pi(X)$ and is thus less sensitive w.r.t. small or large propensities (low treatment overlap). However, the R-learner still divides by $S_t(X, A)$, and $G_t(X, A)$, and is thus sensitive w.r.t. low censoring or survival overlap.

\textbf{\texttt{C}-learner (censoring overlap):} To address low censoring overlap, we can choose the weighting function $f(\widetilde{\eta}_t(X)) = G_{t-1}(X, 1) G_{t-1}(X, 0)$, which down-weights patients who have a large treated or untreated censoring probability before time $t$. In other words, if, for a patient with covariates $X$, either the treated or untreated probability to remain in the study until the time $t$ of interest is small, the patient will be down-weighted in the target loss. This type of weighting results in
\begin{align}\label{eq:censoring_weights}
\rho(Z, \eta_t) &= G_{t-1}(X, 1) G_{t-1}(X, 0) \left(1 - \left(\frac{A}{\pi(X)}+\frac{1-A}{1-\pi(X)}\right) \xi_G(Z, \eta_{t-1})\right)   , \\
\label{eq:censoring_pseudo_outcome}
   \varphi(Z, \eta) &=  S_t(X, 1) - S_t(X, 0) - \frac{(A - \pi(X))\xi_S(Z, \eta_t)S_t(X, A)}{\pi(X) (1 - \pi(X))\left(1 - \left(\frac{A}{\pi(X)}+\frac{1-A}{1-\pi(X)}\right) \xi_G(Z, \eta_{t-1})\right)}. \nonumber
\end{align}
Both the multiplication by $G_{t-1}(X, 1) G_{t-1}(X, 0)$ in Eq.~\eqref{eq:censoring_weights} and the division by $\xi_G(Z, \eta_{t-1})$ below downweight possible extreme loss values induced by low censoring overlap via division by $G_{i}(X, A)$ in $\xi_G(Z, \eta_{t-1})$. However, in contrast to the R-learner, divisions by propensities $\pi(X)$ still occur in the loss (sensitivity to treatment overlap).

\textbf{\texttt{S}-learner (survival overlap):} Analogously to censoring overlap, we can also weight for survival overlap via $f(\widetilde{\eta}_t(X)) = S_{t-1}(X, 1) S_{t-1}(X, 0)$. That is, we down-weight patients who have a low treated or untreated survival probability beyond time $t-1$. This results in 
weighting results in
\begin{align}
\rho(Z, \eta_t) &= S_{t-1}(X, 1) S_{t-1}(X, 0) \left(1 - \left(\frac{A}{\pi(X)}+\frac{1-A}{1-\pi(X)}\right) \xi_S(Z, \eta_{t-1})\right)   , \\
   \varphi(Z, \eta) &=  S_t(X, 1) - S_t(X, 0) - \frac{(A - \pi(X))\xi_S(Z, \eta_t)S_t(X, A)}{\pi(X) (1 - \pi(X))\left(1 - \left(\frac{A}{\pi(X)}+\frac{1-A}{1-\pi(X)}\right) \xi_S(Z, \eta_{t-1})\right)},\nonumber
\end{align}
which is less sensitive to divisions by $S_{i}(X, A)$ as compared to the DR-learner loss.

\textbf{Combined overlap types:} It is also possible to arbitrarily combine weighting to accommodate different overlap types simultaneously. This results in the following learners: 
\fbox{%
\parbox{0.99\linewidth}{%
\footnotesize
\begin{itemize}[leftmargin=5mm, labelindent=4mm, labelwidth=\itemindent]
\item \textbf{\texttt{T+C}-learner (treatment-censoring overlap):}  $f(\widetilde{\eta}_t(X)) = \pi(X) (1 - \pi(X))G_{t-1}(X, 1) G_{t-1}(X, 0)$; 
\item \textbf{\texttt{T+S}-learner (treatment-survival overlap):} $f(\widetilde{\eta}_t(X)) = \pi(X) (1 - \pi(X))S_{t-1}(X, 1) S_{t-1}(X, 0)$; 
\item \textbf{\texttt{C+S}-learner (censoring-survival overlap):} $f(\widetilde{\eta}_t(X)) = S_{t-1}(X, 1) S_{t-1}(X, 0) G_{t-1}(X, 1) G_{t-1}(X, 0)$;
\item \textbf{\texttt{T+C+S}-learner (all):} $f(\widetilde{\eta}_t(X)) = \pi(X) (1 - \pi(X))S_{t-1}(X, 1) S_{t-1}(X, 0) G_{t-1}(X, 1) G_{t-1}(X, 0)$.
\end{itemize}%
}}

\textbf{Choice of learner/ weighting function.} The choice of weighting function depends on the type(s) of overlap we would like our learner to be robust for (similar to choosing between DR- and R-learner in standard causal inference). In practice, we recommend using the estimated nuisance functions to inspect overlap (e.g., by visualizing the propensity score, censoring, and survival functions). 

\section{Experiments}\label{sec:experiments}
\vspace{-0.2cm}

\begin{figure}[ht]
\centering
\begin{minipage}{.485\textwidth}
\tiny
  \centering
    \begin{tabular}{p{0.5cm}|cccc}
    \toprule
    & No violation & Propensity
      & Censoring & Survival \\
    \midrule
    $\emptyset$ &1.86 $\pm$ 0.75&4.56 $\pm$ 2.75&3.41 $ \pm$ 2.54 &2.70 $\pm$ 1.63\\
    \texttt{T} &1.86 $\pm$ 0.72 &\cellcolor{lightgray}3.94 $\pm$ 1.90  &3.37 $\pm$ 2.52&2.66 $\pm$ 1.59 \\
    \texttt{C} &1.93 $\pm$ 0.82&5.02 $\pm$ 2.99 &\cellcolor{lightgray}1.91 $\pm$ 0.64&3.07 $ \pm$ 1.70\\
    \texttt{S} &1.99 $\pm$ 0.85&5.11 $\pm$ 3.09&2.92 $\pm$ 2.33 &\cellcolor{lightgray}2.72 $\pm$ 1.60\\
    T+C &1.95 $\pm$ 0.82&\cellcolor{lightgray}4.19 $\pm$ 2.20&\cellcolor{lightgray}1.87 $\pm$ 0.60&3.02 $\pm$ 1.67\\
    \texttt{T+S} &2.02 $\pm$ 0.85&\cellcolor{lightgray}4.30 $\pm$ 2.28&2.97 $\pm$ 2.40&\cellcolor{lightgray}2.74 $\pm$ 1.56\\
    \texttt{C+S} &2.10 $\pm$ 0.90&5.91 $\pm$ 3.58&\cellcolor{lightgray}1.90 $\pm$ 0.76&\cellcolor{lightgray}2.83 $\pm $ 1.63\\
    \texttt{T+C+S} &2.01 $\pm$ 0.88&\cellcolor{lightgray}4.65 $\pm$ 2.41&\cellcolor{lightgray}1.86 $\pm$ 0.58&\cellcolor{lightgray}2.73 $\pm$ 1.45\\
    \bottomrule
    \end{tabular}
  \captionof{table}{\textbf{PEHE in Scenario 1:} Mean and standard deviation of PEHE averaged over the first time steps across 10 runs. 
  Targeted learners per setting (column) in gray background. $\Rightarrow$ Overall, targeted weighting improves performance.\\}
  \label{tab:PEHE_data1}
  \vspace{-0.5cm}
\end{minipage}%
\hfill
\begin{minipage}{.485\textwidth}
  \tiny
  \vspace{-0.4cm}
    \begin{tabular}{p{0.5cm}|cccc}    \toprule
    & No violation & Propensity
      & Censoring & Survival \\
    \midrule
    $\emptyset$ &1.64 $\pm$ 0.19&1.01 $\pm$ 0.31&0.61 $\pm$ 0.15&4.54 $\pm$ 0.36\\
    \texttt{T} &1.12 $\pm$ 0.41&\cellcolor{lightgray}0.65 $\pm$ 0.18&0.75 $\pm$ 0.23&6.77 $\pm$ 1.08\\
    \texttt{C} &1.91 $\pm$ 0.46&0.98 $\pm$ 0.27 &\cellcolor{lightgray}0.60 $\pm$ 0.14&4.82 $\pm$ 0.46\\
    \texttt{S} &2.16 $\pm$ 0.65&0.87 $\pm$ 0.34&0.60 $\pm$ 0.21&\cellcolor{lightgray}4.56 $\pm$ 0.70\\
    \texttt{T+C} &1.40 $\pm$ 0.29&\cellcolor{lightgray}0.65 $\pm$ 0.18&\cellcolor{lightgray}0.74 $\pm$ 0.23&4.52 $\pm$ 0.60\\
    \texttt{T+S} &3.55 $\pm$ 1.13&\cellcolor{lightgray}0.56 $\pm$ 0.14&0.71 $\pm$ 0.19&\cellcolor{lightgray} 9.23 $\pm$ 1.27\\
    \texttt{C+S} &2.71 $\pm$ 0.70&0.86 $\pm$ 0.31&\cellcolor{lightgray}0.57 $\pm$ 0.18&\cellcolor{lightgray}5.38 $\pm$ 0.57\\
    \texttt{T+C+S} &1.35 $\pm$ 0.32&\cellcolor{lightgray}0.56 $\pm$ 0.13&\cellcolor{lightgray}0.70 $\pm$ 0.18&\cellcolor{lightgray}4.55 $\pm$ 0.69\\
    \bottomrule
    \end{tabular}
  \captionof{table}{\textbf{PEHE in Scenario 2:} Mean and standard deviation over all assessed time steps across 10 runs. Targeted learners per setting in gray background. $\Rightarrow$ Again, targeted weighting generally improves performance.}
  \label{tab:PEHE_data2}
    \vspace{-0.5cm}
\end{minipage}
\end{figure}

We follow best-practice in causal inference literature (e.g., \cite{Curth.2021b, Frauen.2025}) and perform experiments using synthetic and real-world data to demonstrate the effectiveness of our toolbox to different types of overlap violations. We instantiate all models with the \emph{same} neural network architectures and hyperparameters. This allows us to assess the effect of our proposed weighting scheme, as differences in performance can be merely attributed to the different orthogonal loss functions for training the second-stage model. Implementation details are in Appendix~\ref{app:implementation}.

\textbf{Synthetic data.}
\emph{Data generation:} We consider two different data generation mechanisms: $\bullet$\,\emph{Scenario~1} considers a one-dimensional confounder and sigmoid propensity and hazard functions across five time steps. $\bullet$\,\emph{Scenario~2} follows \cite{Curth.2021b} by generating 10-dimensional multivariate normal confounders with correlations across 30 time steps. From each scenario, we generate multiple different datasets, in which we introduce propensity, censoring, or survival overlap violations or a combination of them. All datasets consist of 30,000 samples. For details, see Appendix~\ref{app:implementation}.

\begin{wrapfigure}[10]{r}{.6\textwidth}
\vspace{-0.6cm}
    \centering
    \includegraphics[width=\linewidth]{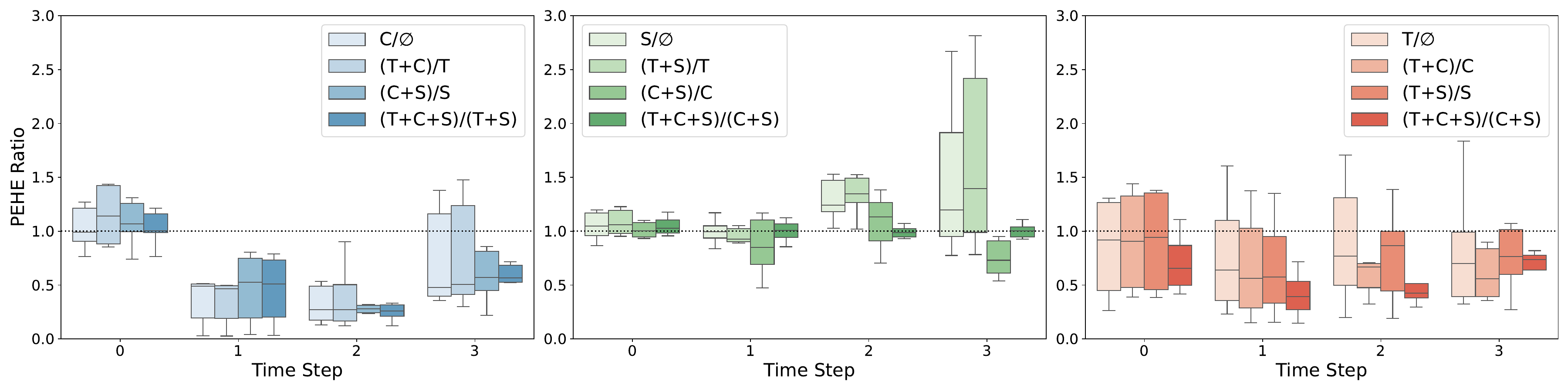}
    \vspace{-0.7cm}
    \caption{\textbf{Benefit of targeted weighting over time.} Ratios of PEHE of the targeted learner wrt. the learner without the correct target (data scenario I across 10 runs). \textcolor{MidnightBlue}{\textbf{Blue:}} Low censoring overlap scenario. \textcolor{OliveGreen}{\textbf{Green:}} Low survival overlap scenario. \textcolor{Maroon}{\textbf{Red:}} Low treatment overlap scenario.}
    \label{fig:PEHE_ratios}
\end{wrapfigure}

\emph{Results:} We evaluate the performance of our various orthogonal learners based on the \emph{precision in the estimation of heterogeneous effects} (PEHE) with regard to the true CATE \cite{Hill.2011}. We compare the PEHE for scenario 1 (Table~\ref{tab:PEHE_data1}) and scenario (Table~ \ref{tab:PEHE_data2}) across different types of overlap violations (i.e., no, treatment overlap, censoring overlap, and survival overlap violation, respectively). For better comparability, we report the PEHE $\times 10^{-4}$. Across both scenarios, we observe that \emph{the learners targeted for the low-overlap scenario achieve the lowest PEHE.} Furthermore, targeted weighting reduces the estimation variance but unsuitable weighting can harm performance.

In Figure~\ref{fig:PEHE_ratios}, we further show the benefit of our targeted weighted learners in terms of PEHE ratios at different time steps. For data scenarios with low censoring or survival overlap, we observe a decreasing benefit of our learners over time after an initially equal performance of all learners. This is in line with our expectations: (i) At timestep $0$, no censoring or time-dependent survival hazards are present. Thus, the respective survival- and censoring-overlap weighting does not affect the prediction performance. (ii) The benefit of the targeted weighting reduces with increasing timesteps due to increasing hazards, decreasing sample size with $\Tilde{T}\geq t$, increasing hazards, and thus decreasing effect of $f(\Tilde{\eta}_t(X)$. (iii) Treatment overlap is independent of $t$. Therefore, the benefit of learners targeted to low treatment overlap is constant over time.

\textbf{Real-world data.} \emph{Data:}
We perform a case study on the Twins dataset as in \cite{Louizos.2017} to showcase the applicability of our learners to high-dimensional real-world data. The dataset considers the birth weight of 11984 pairs of twins born in the USA between 1989 and 1991 with respect to mortality in the first year of life. Treatment $a = 1$ corresponds to being born the heavier twin. The dataset contains 46 confounders. For a detailed description of the dataset, see \cite{Louizos.2017}.

\begin{figure}[ht]
\vspace{-0.4cm}
\begin{minipage}{.325\textwidth}
\vspace{-0.4cm}
    \centering
    \includegraphics[width=\linewidth]{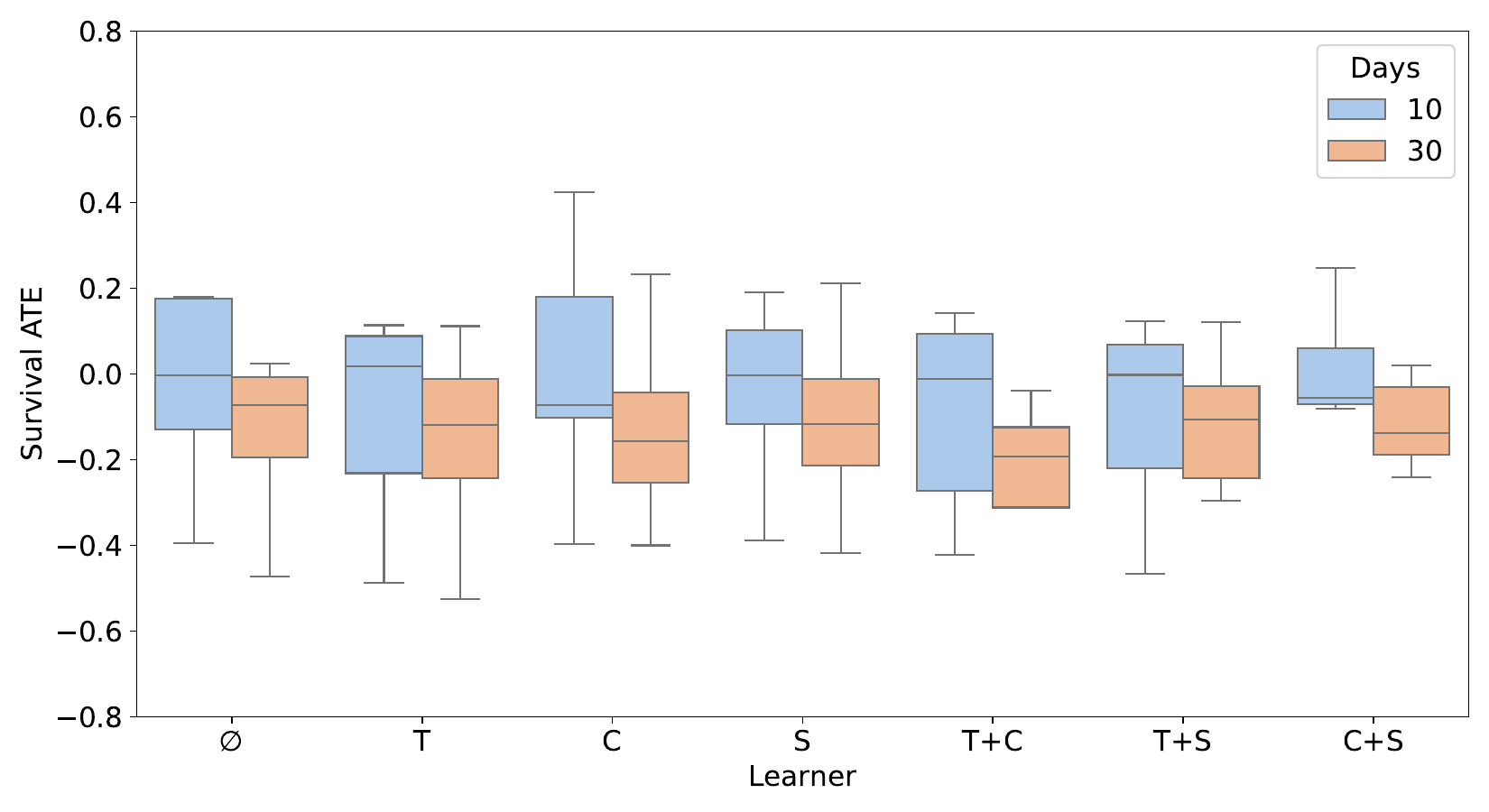}
    \vspace{-0.5cm}
    \captionof{figure}{\textbf{Twins:} 10- and 30-day effects across 10 runs $\Rightarrow$ Estimated survival effects align with the literature. Variance decreases for weighted learners.}
    \label{fig:ate_twins}
\end{minipage}%
\vspace{-0.2cm}
\hfill
\begin{minipage}{.325\textwidth}
  \centering
  \includegraphics[width=\linewidth]{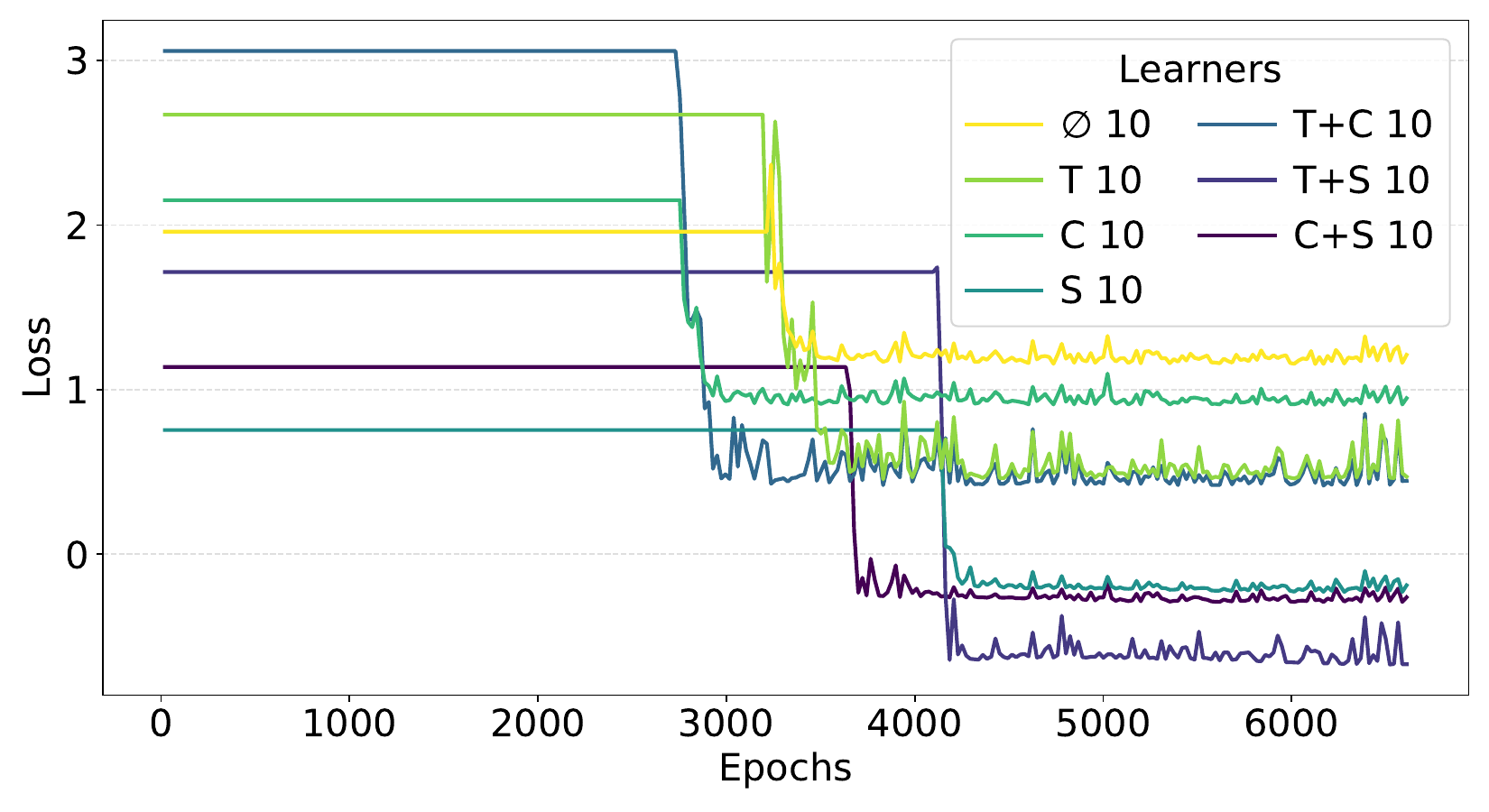}
  \vspace{-0.5cm}
  \captionof{figure}{\textbf{Validation loss for 10-day survival:} Fastest convergence for the \texttt{C}- and \texttt{T+C}-learners, indicating the presence of censoring and treatment overlap violations.\\} 
  \label{fig:twins_val_loss_10}
\end{minipage}%
\vspace{-0.2cm}
\hfill
\begin{minipage}{.325\textwidth}
  \centering
  \includegraphics[width=\linewidth]{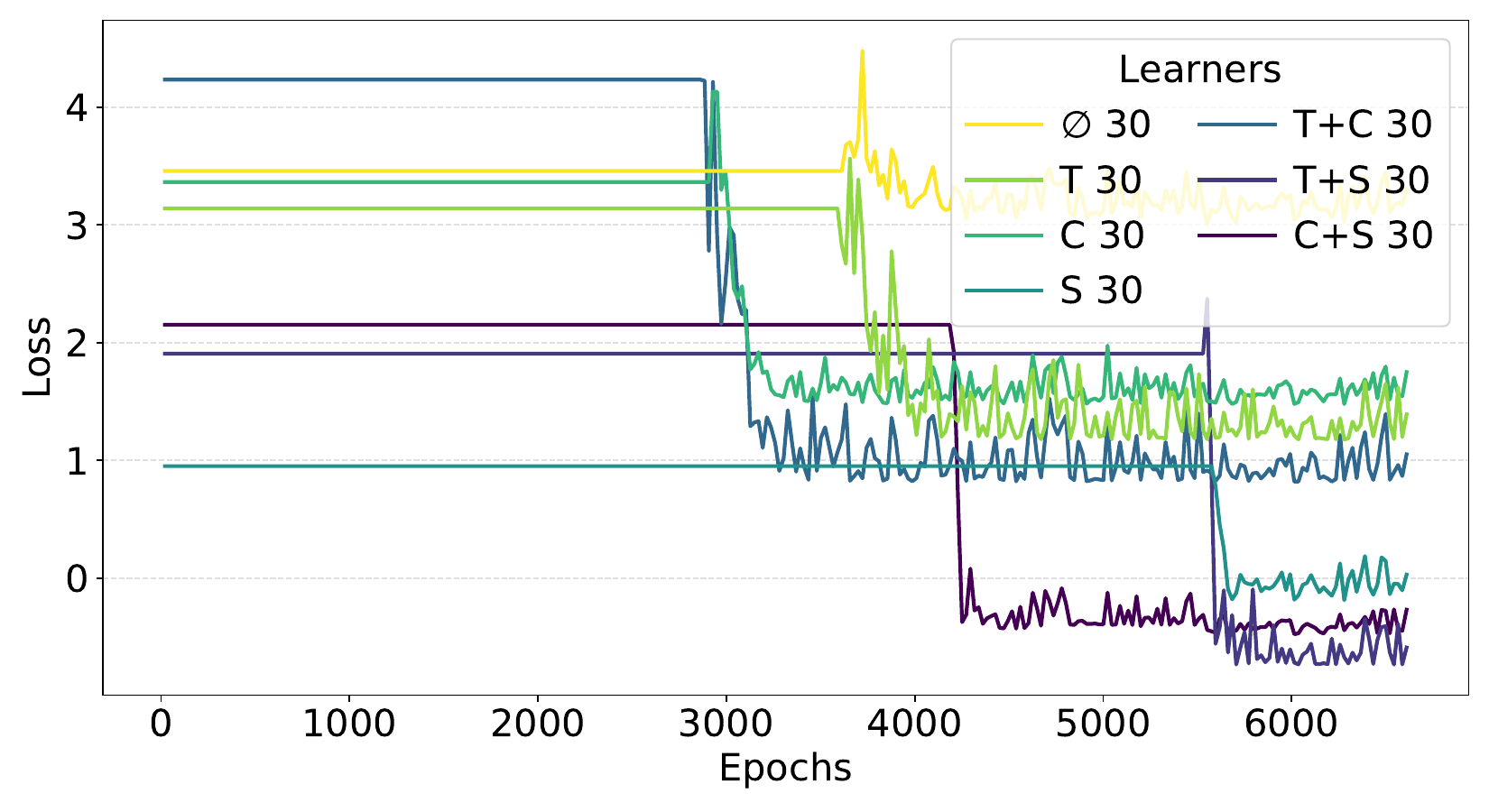}
  \vspace{-0.5cm}
  \captionof{figure}{\textbf{Validation loss for 30-day survival:} As in Fig.\ref{fig:twins_val_loss_10}, the \ C and \  T+C \ learners converge fastest, reproducing the former finding. \\}
  \label{fig:twins_val_loss_30}
\end{minipage}
\vspace{-0.3cm}
\end{figure}

\emph{Results:}
We analyze the effect on survival after 10 days and one month. We observe generally a negative effect on mortality, i.e., a positive effect on survival, for being born heavier (see Fig.~\ref{fig:ate_twins}). This is in line with the literature and medical results \cite[e.g.,][]{Louizos.2017}. We also observe an increasing effect over time, i.e., from 10 to 30 days, which again is in line with medical domain knowledge. Overall, we observe a lower estimation variance for the \texttt{T+C} (for treatment-censoring overlap) and the \texttt{C+S}-learner learners (for censoring-survival overlap). This suggests the presence of multiple forms of overlap violations, especially censoring overlap, in the data, and underlines the necessity of our survival learners targeted for specific overlap types. The benefits of our proposed orthogonal survival learners when inspecting the validation loss during training (Fig.~\ref{fig:twins_val_loss_10} and \ref{fig:twins_val_loss_30}): The \texttt{C}- and \texttt{C+T}-learners show by far the fastest convergence, whereas survival-overlap weighting in the \texttt{S}-learner hinders fast convergence. This affirms that suitably weighted learners are close to the data-generating process, enabling fast convergence, which brings important benefits for estimation in fine-sample regimens as common in medical applications.

\clearpage

\printbibliography

\clearpage

\appendix

\section{Extended related work}\label{app:rw}

\textbf{Semiparametric inference and orthogonal learning:} The concept of Neyman orthogonality is deeply rooted in semiparametric efficiency theory \cite{Kennedy.2022,vanderVaart.1998}. Neyman-orthogonal and efficient-influence function-based estimators have a long tradition in causal inference, primarily for the estimation of average treatment effects.  Examples include the AIPTW estimator \cite{Robins.1994b}, TMLE \cite{vanderLaan.2006}, the doubleML framework \cite{Chernozhukov.2018}, and doubly robust policy value estimation \cite{Athey.2021}. Recently, the concept of Neyman orthogonality has been extended to HTEs~\cite{Foster.2023}, which allowed the construction of various orthogonal learners, including the DR- and R-learner for conditional average treatment effects \cite{Kennedy.2023b,, vanderLaan.2006b,  Morzywolek.2023, Nie.2021}.

\textbf{Efficient average treatment effect estimation for time-to-event data:} Most work on semiparametric inference for time-to-event data has focused on \emph{average treatment effects} (ATEs). For example, \cite{Rubin.2007} proposed so-called doubly robust censoring unbiased transformations for semiparametric efficient inference on the ATE under censoring. Furthermore, various works proposed one-step Targeted Maximum Likelihood Estimators (TMLE) for estimating average causal quantities in survival settings \cite{Cai.2020, Westling.2023, Zhu.2020}.

\clearpage

\section{Background on influence functions and orthogonal learning}\label{app:background}

In the following, we provide a short background on efficient influence functions and orthogonal learning. We mostly follow Kennedy~\cite{Kennedy.2022}.

\textbf{Efficient influence function (EIF).} In statistics, estimation is formalized via statistical model $\{\mathbb{P} \in \mathcal{P}\}$, where $\mathcal{P}$ is a family of probability distributions. We are interested in estimating a functional $\psi \colon \mathcal{P} \to \mathbb{R}$. For example, $\psi(\prob) = \E[S_t(X, 1) - S_t(X, 0)] = \E[\tau_t(X)]$. If $\psi$ is sufficiently smooth, it admits the so-called \emph{von Mises} or \emph{distributional Taylor expansion}
\begin{equation}\label{eq:von_mise_expansion}
    \psi(\Bar{\mathbb{P}}) - \psi(\mathbb{P}) = \int \phi(t, \Bar{\mathbb{P}}) \diff(\Bar{\mathbb{P}} - \mathbb{P})(t) + R_2(\Bar{\mathbb{P}}, \mathbb{P}),
\end{equation}
where $R_2(\Bar{\mathbb{P}}, \mathbb{P})$ is a \emph{second-order remainder term} and $\phi(t, \mathbb{P})$ is the so-called \emph{efficient influence function} of $\psi$, satisfying $\int \phi(t, \mathbb{P}) d\mathbb{P}(t) = 0$ and $\int \phi(t, \mathbb{P})^2 d\mathbb{P}(t) < \infty$.

\textbf{Plug-in bias and debiased inference.} Let now $\hat{\mathbb{P}}$ be an estimator of $\mathbb{P}$ and $\psi(\hat{\mathbb{P}})$ the so-called \emph{plug-in estimator} of $\psi(\prob)$. The von Mises expansion from Eq.~\eqref{eq:von_mise_expansion} implies that $\psi(\hat{\mathbb{P}})$ yields a first-order \emph{plug-in bias} because
\begin{equation}\label{eq:plugin_bias}
    \psi(\hat{\mathbb{P}}) - \psi(\mathbb{P}) = - \int \phi(t, \hat{\mathbb{P}}) \diff\mathbb{P}(t) + R_2(\hat{\mathbb{P}}, \mathbb{P})
\end{equation}
due to that $\int \phi(t, \hat{\mathbb{P}}) \diff\hat{\mathbb{P}}(t) = 0$. In other words, simply plugging the estimated nuisance functions into the identification formula can result in a biased estimator.

A simple way to correct for the plug-in bias is to estimate the bias term from the right-hand side of Eq.~\eqref{eq:plugin_bias} and add it to the plug-in estimator via
\begin{equation}
    \hat{\psi}^{\text{A-IPTW}} = \psi(\hat{\mathbb{P}}) + \mathbb{P}_n(\phi(T, \hat{\mathbb{P}})).
\end{equation}
this estimator is called (one-step) bias-corrected estimator.

\textbf{Debiased target loss.} One-step bias correction generally only works for finite-dimensional target quantities (e.g., average causal effects such as $\E[\tau_t(X)]$). In this paper, however, we are interested in the HTE $\tau_t(X)$, which is an infinite-dimensional target quantity. Hence, direct one-step bias correction is not applicable. The EIF can nevertheless be used for obtaining a ``good'' estimator of the HTE. The idea is that, instead of de-biasing the HTE $\tau_t(X)$, we can de-bias the \emph{target loss} that we aim at minimizing. This leads to orthogonal versions of the target losses, which is precisely what we derive in our paper.

\clearpage

\section{Estimation of nuisance functions}\label{app:nuisance}

Here, we discuss the estimation of the nuisance functions, that is, the propensity score $\pi(x)$, and the hazard functions  $\lambda^S_j(x, a)$ and $\lambda^G_j(x, a)$. 

\textbf{Propensity score.} The propensity score $\prob(A=1 \mid X=x)$ defines a standard binary classification problem. Hence, any standard classification algorithm (such as a feed-forward neural network with softmax activation and cross-entropy loss) can be used for propensity estimation.

\textbf{Hazard functions.} We can estimate the hazard functions via \emph{maximum likelihood}. We can write the (population) likelihood as
\begin{align}
   & \prob(\widetilde{T} = \widetilde{t}, \Delta^S = \delta^S, \Delta^G = \delta^G  \mid X = x, A = a) \\
    =& \prod_{j=0}^{\widetilde{t}-\delta^S} \left(1 -  \lambda^S_j(x, a)\right) \lambda^S_{\widetilde{t}}(x, a)^{\delta^S} g_{\widetilde{t}}(x, a)^{\delta^S \delta^G + 1 - \delta^S} G_{\widetilde{t}}(x, a)^{\delta^S(1 - \delta^G)},
\end{align}
where $g_{\widetilde{t}}(x, a)$ denotes the (conditional) probability mass function of $C$.
Hence, we can use parametric models $\lambda^S_j(x_i, a_i, \theta)$ parametrized by $\theta$ (e.g., neural networks) to minimize the resulting the resulting log-likelihood loss
\begin{equation}
    \mathcal{L}^S(\theta) = \sum_{i=1}^n \sum_{j=0}^{\widetilde{t}_i -\delta^S_i} \log\left(1 -  \lambda^S_j(x_i, a_i, \theta) \right) + \delta^S_i \log\left(\lambda^S_{\widetilde{t}_i}(x_i, a_i, \theta)\right).
\end{equation}
Analogously, $\lambda^G_j(x_i, a_i, \theta)$ can be trained to minimize 
\begin{equation}
    \mathcal{L}^G(\theta) = \sum_{i=1}^n \sum_{j=0}^{\widetilde{t}_i -\delta^G_i} \log\left(1 -  \lambda^G_j(x_i, a_i, \theta) \right) + \delta^G_i \log\left(\lambda^G_{\widetilde{t}_i}(x_i, a_i, \theta)\right).
\end{equation}

\clearpage

\section{Extensions to marginalized effects}\label{app:marginal}

We now discuss an extension to marginalized effects, i.e., the case where we are interested in the causal estimand
\begin{equation}
    \tau_t(v) = \mathbb{P}(T(1) > t \mid V = v) - \mathbb{P}(T(0) > t \mid V = v)
\end{equation}
conditioned on a subset of confounders $V \subset X$. This can be relevant in many applications where certain confounders are not available during inference time (called runtime confounding \cite{Coston.2020}) or should not be used (due to, e.g., fairness or privacy constraints). Under Assumptions~\ref{ass:causal} and \ref{ass:survival}, identification holds via
\begin{equation}
    \tau_t(v) = \E[\tau_t(X) \mid V = v].
\end{equation}
For our framework in the marginalized case, we consider weighting function $f(\widetilde{\eta}_t(V))$, where the weighting only depends on the marginalized propensity score $\widetilde{\eta}_t(V) = (\pi(V) = P(A = 1 \mid V))$.

The orthogonal loss is given by
\begin{equation}\label{eq:loss_marginalized}
\mathcal{L}_{f}(g, \eta_t) = \frac{1}{\E[f(\widetilde{\eta}_t(V))]} \E\left[\rho(Z, \eta_t)\left(\varphi(Z, \eta_t) - g(V) \right)^2\right],
\end{equation}
where $\rho(Z, \eta_t)$ is the same as in Eq.~\eqref{eq:orthogonal_loss}, and
\begin{align}
    \rho(Z, \eta_t) =  f(\widetilde{\eta}_t(V)) + \frac{\partial f}{\partial \pi}(\widetilde{\eta}_t(V)) (A - \pi(V)).
\end{align}
The following theorem states that this actually targets a meaningful weighted loss.
\begin{theorem}
    Let $g^\ast_f = \arg\min_{g \in \mathcal{G}} \mathcal{L}_{f}(g, \eta_t)$ be the minimizer of the orthogonal loss from Eq.~\eqref{eq:loss_marginalized} over a class of functions $\mathcal{G}$. Then, $g^\ast_f$ also minimizes the oracle loss
    \begin{equation}
        g^\ast_f = \arg\min_{g \in \mathcal{G}}  \frac{1}{\E[f(\widetilde{\eta}_t(V))]} \E\left[f(\widetilde{\eta}_t(V))\left(\tau_{t}(V) - g(V) \right)^2\right]
    \end{equation}
Hence, $g^\ast_f = \tau_t$ for any weighting function $f$ as long as $\tau_t \in \mathcal{G}$.
\end{theorem}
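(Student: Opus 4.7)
The proof plan mirrors that of Theorem~\ref{thrm:minimizer}, now with $V$ replacing $X$ in all conditioning arguments. Expanding both losses as quadratics in $g(V)$, the terms that do not involve $g$ are constants and drop out of the argmin. Matching the coefficients of $g(V)$ and $g(V)^2$ reduces the theorem to the pair of conditional moment identities $\E[\rho(Z,\eta_t)\mid V]=f(\widetilde{\eta}_t(V))$ and $\E[\rho(Z,\eta_t)\,\varphi(Z,\eta_t)\mid V]=f(\widetilde{\eta}_t(V))\,\tau_t(V)$. Once these are established, $\mathcal{L}_f(g,\eta_t)$ and the weighted target loss on the right-hand side differ only by a $g$-independent constant, so their minimizers over $\mathcal{G}$ coincide, and positivity of $f$ forces this common minimizer to equal $\tau_t$ whenever $\tau_t\in\mathcal{G}$.

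The first identity is short. Writing $\rho(Z,\eta_t) = f(\widetilde{\eta}_t(V)) + \frac{\partial f}{\partial\pi}(\widetilde{\eta}_t(V))(A-\pi(V))$, the tower property yields $\E[A\mid V]=\E[\pi(X)\mid V]=\pi(V)$ by the definition of the marginalized propensity. Hence the correction term has zero conditional mean given $V$, and $\E[\rho\mid V]=f(\widetilde{\eta}_t(V))$ follows.

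For the second identity, the plan is to iterate expectations and exploit the defining unbiasedness property of the survival pseudo-outcome inherited from Theorem~\ref{thrm:orthogonal}, namely $\E[\varphi(Z,\eta_t)\mid X]=\tau_t(X)$ — the same property that underlies the proof of Theorem~\ref{thrm:minimizer}, and which in turn is a consequence of the martingale identities $\E[\xi_S(Z,\eta_t)\mid X,A]=\E[\xi_G(Z,\eta_{t-1})\mid X,A]=0$ built into the hazard definitions in Eq.~\eqref{eq:eif_component_s}. Since the marginalized $\rho$ is $\sigma(V,A)$-measurable and $V\subset X$, one first takes expectations with respect to the unobserved components of $Z$ given $(X,A)$, and then iterates against the distribution of $X$ given $V$. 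Splitting $\rho$ into its two summands produces a leading term $f(\widetilde{\eta}_t(V))\,\E[\tau_t(X)\mid V]=f(\widetilde{\eta}_t(V))\,\tau_t(V)$, together with a residual of the form $\frac{\partial f}{\partial\pi}(\widetilde{\eta}_t(V))\,\E[(A-\pi(V))\tau_t(X)\mid V]$.

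I expect this residual cross-term to be the main obstacle. In Theorem~\ref{thrm:minimizer} the analogous term vanishes immediately because conditioning on $X$ freezes $\pi(X)$, whereas here $\pi(X)$ remains random given $V$ and the cross-moment reduces to $\mathrm{Cov}(\tau_t(X),\pi(X)\mid V)$. The plan is to exploit the specific structure of the marginalized pseudo-outcome $\varphi$ from Eq.~\eqref{eq:loss_marginalized} — which, paralleling the construction of $\rho$ in Eq.~\eqref{eq:orthogonal_loss}, carries an additional $(A-\pi(V))$-proportional bias-correction term — so that this residual is cancelled exactly, in the same spirit as the DR correction in the unmarginalized case. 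Once the cancellation is verified, both identities hold, the two losses coincide up to a constant in $g$, and the conclusion $g^\ast_f=\tau_t$ follows.
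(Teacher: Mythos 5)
Your overall strategy matches the paper's: expand the loss as a quadratic in $g(V)$, discard the $g$-independent terms, and reduce the claim to the two conditional moment identities $\E[\rho(Z,\eta_t)\mid V]=f(\widetilde{\eta}_t(V))$ and $\E[\rho(Z,\eta_t)\,\varphi(Z,\eta_t)\mid V]=f(\widetilde{\eta}_t(V))\,\tau_t(V)$. Your first identity is correct and is exactly the paper's computation. The problem is the second identity, which you do not actually prove. You correctly isolate the residual
\begin{equation*}
\frac{\partial f}{\partial \pi}(\widetilde{\eta}_t(V))\,\E\bigl[(A-\pi(V))\,\tau_t(X)\mid V\bigr]
=\frac{\partial f}{\partial \pi}(\widetilde{\eta}_t(V))\,\mathrm{Cov}\bigl(\pi(X),\tau_t(X)\mid V\bigr),
\end{equation*}
but your proposed resolution --- that it is ``cancelled exactly'' by additional structure in the marginalized pseudo-outcome $\varphi$ --- cannot work as described. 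The only correction term in $\varphi$ is the $\xi_S$-term, and by Lemma~\ref{lem:app_eif_component_zero} it has zero conditional mean given $(X,A)$; since $\rho$ is $\sigma(V,A)$-measurable and $V\subset X$, this term contributes exactly zero to $\E[\rho\,\varphi\mid V]$ and therefore cannot offset a covariance that survives conditioning on $X$. So your argument, as written, terminates with an uncancelled term and is incomplete.

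For comparison: the paper's proof handles this same cross term in a single step, by conditioning the product $\rho\,(S_t(X,1)-S_t(X,0)-\tau_t(V))\,g(V)$ on $(V,A)$, pulling out $\rho$ and $g(V)$, and then replacing the inner expectation $\E[S_t(X,1)-S_t(X,0)\mid V,A]$ with $\E[S_t(X,1)-S_t(X,0)\mid V]=\tau_t(V)$, which makes the bracket vanish. That interchange of conditioning sets is precisely the statement that $\E[(A-\pi(V))\tau_t(X)\mid V]=0$, i.e., that the conditional covariance you identified vanishes --- so you have located the genuine crux of the proof, but neither supplied the paper's justification for it nor a working substitute. To repair your write-up you would need to either invoke this interchange explicitly (and argue why it holds in the intended setting) or add a term to $\rho$ or $\varphi$ that corrects for the outer regression $\tau_t(V)=\E[\tau_t(X)\mid V]$; as it stands, the cancellation you rely on does not exist in the construction of Eq.~\eqref{eq:loss_marginalized}.
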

\begin{proof}
We write the orthogonal loss as
\begin{align}
\mathcal{L}_{f}(g, \eta_t) &= \frac{1}{\E[f(\widetilde{\eta}_t(V))]} \E\left[\rho(Z, \eta_t)\left(\varphi(Z, \eta_t) - g(V) \right)^2\right]\\
&= \frac{1}{\E[f(\widetilde{\eta}_t(V))]} \E\left[\rho(Z, \eta_t)\left(\varphi(Z, \eta_t) - \tau_t(V) + \tau_t(V) - g(V) \right)^2\right]\\
&= \frac{1}{\E[f(\widetilde{\eta}_t(V))]} \left(
\E\left[\rho(Z, \eta_t)\left(\varphi(Z, \eta_t) - \tau_t(V)\right)^2\right] \right. \\
& \left. \quad + 2 \E\left[\rho(Z, \eta_t)\left(\varphi(Z, \eta_t) - \tau_t(V)\right)\tau_t(V)\right]\right)\\
& \quad +  \frac{1}{\E[f(\widetilde{\eta}_t(V))]} \left(- 2 \E\left[\rho(Z, \eta_t)\left(\varphi(Z, \eta_t) - \tau_t(V)\right)g(V)\right] \right. \\
& \left. \quad + 
\E\left[\rho(Z, \eta_t)\left(\tau_t(V) - g(V)\right)^2\right] \right),
\end{align}
where only the terms in the last equation depend on $g$. The first term we can rewrite as
\begin{align}
    &\E\left[\rho(Z, \eta_t)\left(\varphi(Z, \eta_t) - \tau_t(V)\right)g(V)\right] \\
    = & \E\left[\rho(Z, \eta_t)\left(S_t(X, 1) - S_t(X, 0) - \frac{(A - \pi(X))\xi_S(Z, \eta_t)S_t(X, A) f(\widetilde{\eta}_t(X))}{\pi(X) (1 - \pi(X))\rho(Z, \eta_t)} - \tau_t(V)\right)g(V)\right] \\
    = & \E\left[\E\left[\rho(Z, \eta_t)\left(S_t(X, 1) - S_t(X, 0) - \tau_t(V)\right)g(V) \Big\vert V, A \right]\right]\\
     & - \E\left[\E\left[\frac{(A - \pi(X))\xi_S(Z, \eta_t)S_t(X, A) f(\widetilde{\eta}_t(X))}{\pi(X) (1 - \pi(X))}g(V) \Big\vert X, A \right]\right] \\
    = & \E\left[\rho(Z, \eta_t) g(V) \left(\E\left[S_t(X, 1) - S_t(X, 0)\Big\vert V \right] - \tau_t(V)\right)\right]\\
     & - \E\left[\frac{(A - \pi(X)) \E[\xi_S(Z, \eta_t)\mid X, A] S_t(X, A) f(\widetilde{\eta}_t(X))}{\pi(X) (1 - \pi(X))}g(V) \right]\\
    \overset{(\ast)}{=} & 0,
\end{align}
where $(\ast)$ follows from Lemma~\ref{lem:app_eif_component_zero}.
For the second term, note that
\begin{align}
  & \E\left[\rho(Z, \eta_t) \mid V\right] \\
   = &\E\left[f(\widetilde{\eta}_t(V)) + \frac{\partial f}{\partial \pi}(\widetilde{\eta}_t(V)) (A - \pi(V))\right. \\
    & \left.-  \Big\vert V\right] \\
   = &f(\widetilde{\eta}_t(V)) + \frac{\partial f}{\partial \pi}(\widetilde{\eta}_t(V)) (\pi(V) - \pi(V)) \\
 \overset{(\ast\ast)}{=} & f(\widetilde{\eta}_t(V)),
\end{align}
where $(\ast\ast)$ follows from Lemma~\ref{lem:app_eif_component_zero}.
Hence,
\begin{align}
    \E\left[\rho(Z, \eta_t)\left(\tau_t(V) - g(V)\right)^2\right] 
    &= \E\left[\E\left[\rho(Z, \eta_t)\left(\tau_t(V) - g(V)\right)^2\mid V \right]\right] \\
    &= \E\left[\E\left[f(\widetilde{\eta}_t(V))\left(\tau_t(V) - g(V)\right)^2\mid V \right]\right] \\
    &= \E\left[f(\widetilde{\eta}_t(V))\left(\tau_t(V) - g(V) \right)^2\right].
\end{align}

Putting everything together, we obtain
\begin{align}
\mathcal{L}_{f}(g, \eta_t) &= \frac{1}{\E[f(\widetilde{\eta}_t(V))]} \left(
\E\left[\rho(Z, \eta_t)\left(\varphi(Z, \eta_t) - \tau_t(V)\right)^2\right] \right. \\
& \left. \quad + 2 \E\left[\rho(Z, \eta_t)\left(\varphi(Z, \eta_t) - \tau_t(V)\right)\tau_t(V)\right]\right)\\
& \quad + \frac{1}{\E[f(\widetilde{\eta}_t(V))]} \E\left[f(\widetilde{\eta}_t(V))\left(\tau_t(V) - g(V) \right)^2\right],
\end{align}
which proves the claim because the first two summands do not depend on $g$ and do not affect the minimization.
\end{proof}

As a consequence, we obtain the following two learners (which are, to the best of our knowledge, novel).

\textbf{Marginalized survival DR-learner (no weighting; in our taxonomy: marginalized $\emptyset$-learner).} Here, we set $f(\widetilde{\eta}_t(V)) = 1$, which implies $\rho(Z, \eta_t) = 1$ and 
\begin{equation}\label{eq:loss_dr_marginal}
\mathcal{L}_{DR}(g, \eta_t) = \E\left[\left(\varphi(Z, \eta_t) - g(V) \right)^2\right]
\end{equation}
with
\begin{equation}
   \varphi(Z, \eta) =  S_t(X, 1) - S_t(X, 0) - \frac{(A - \pi(X))\xi_S(Z, \eta_t)S_t(X, A)}{\pi(X) (1 - \pi(X))}.
\end{equation}
Equivalently, this can be written as a standard DR-learner
\begin{equation}
   \varphi(Z, \eta) =  S_t(X, 1) - S_t(X, 0) + \frac{(A - \pi(X))}{\pi(X) (1 - \pi(X))} (Y(\eta_t) - S_t(X, A))   
\end{equation}
with the transformed outcome
$Y(\eta_t) = S_t(X, A) (1 - \xi_S(Z, \eta_t))$.

\textbf{Marginalized survival R-learner (marginalized treatment overlap; in our taxonomy: marginalized \texttt{T}-learner).} Here, we set
$f(\widetilde{\eta}_t(X)) = \pi(V) (1 - \pi(V))$, which implies $\rho(Z, \eta_t) = (A - \pi(V))^2$. The orthogonal loss becomes
\begin{equation}
\mathcal{L}_{R}(g, \eta_t) = \E\left[\frac{(A - \pi(V))^2}{\E[\pi(V)(1 - \pi(V))]}\left(\varphi(Z, \eta_t) - g(V) \right)^2\right],
\end{equation}
where
\begin{equation}
   \varphi(Z, \eta) =  S_t(X, 1) - S_t(X, 0) - w(X)  \xi_S(Z, \eta_t)S_t(X, A)
\end{equation}
with
\begin{equation}
w(X) = \frac{\pi(V) (1 - \pi(V))}{\pi(X) (1 - \pi(X))} \frac{(A - \pi(X))}{(A - \pi(V))^2}.
\end{equation}
This is equivalent to minimizing the R-learner loss
\begin{equation}\label{eq:loss_R_marginal}
\mathcal{L}_{R}(g, \eta_t) = \E\left[\left(\widetilde{Y}(\eta_t) - \widetilde{A}(\widetilde{\eta}_t) g(V) \right)^2\right],
\end{equation}
where $\widetilde{A}(\widetilde{\eta_t}) = A - \pi(V)$ and
\begin{align}
\widetilde{Y}(\eta_t) &= w(X) {Y}(\eta_t) + A \left[(1 - w(X))S_t(X, 1) + (w(X) - 1)S_t(X, 0)) \right] \\
& \quad - \left[\pi(V) S_t(X, 1) + (w(X) - \pi(V)) S_t(X, 0) \right].
\end{align}

Note that this coincides with the standard survival R-learner for $V=X$ as this implies $w(X) = 1$.

\clearpage

\section{Extension to the continuous-time setting}\label{app:continuous_time}

Our survival learners also extend to the continuous-time setting by making two key changes: (i)~we have to estimate the hazard function in a different way, and (ii)~we have to add a small modification to our weighted orthogonal loss.

\textbf{(i) Hazard functions in continuous time.} In continuous time, we can write
\begin{align}
    \lambda^S_t(x, a) &= \mathbb{P}(\widetilde{T} = t, \Delta^S = 1 \mid \widetilde{T} \geq t, X = x, A = a) \\
    &= \frac{\mathbb{P}(\widetilde{T} = t, \Delta^S = 1 \mid X = x, A = a)}{\mathbb{P}(\widetilde{T} \geq t \mid X = x, A = a)} \\
    &= \frac{\mathbb{P}(\widetilde{T} = t \mid \Delta^S = 1, X = x, A = a) \mathbb{P}(\Delta^S = 1 \mid X = x, A = a)}{\sum_{k \geq t}\mathbb{P}(\widetilde{T} = k \mid X = x, A = a)} \\
    &= \frac{\mathbb{P}(\widetilde{T} = t \mid \Delta^S = 1, X = x, A = a) \mathbb{P}(\Delta^S = 1 \mid X = x, A = a)}{\sum_{k \geq t}\mathbb{P}(\widetilde{T} = k \mid X = x, A = a)} \\
\end{align}
and analogously
\begin{equation}
    \lambda^G_t(x, a)
    = \frac{\mathbb{P}(\widetilde{T} = t \mid \Delta^G = 1, X = x, A = a) \mathbb{P}(\Delta^G = 1 \mid X = x, A = a)}{\sum_{k \geq t}\mathbb{P}(\widetilde{T} = k \mid X = x, A = a)}.
\end{equation}

Hence, we can estimate the hazards by estimating the conditional probabilities $\mathbb{P}(\widetilde{T} = t \mid \Delta^S = 1, X = x, A = a)$, $\mathbb{P}(\Delta^S = 1 \mid X = x, A = a)$, and $\mathbb{P}(\widetilde{T} = t \mid  X = x, A = a)$ for all $t$. This can be done by using standard classification algorithms such a feed-forward neural networks with softwax activation and cross-entropy loss. As an alternative, one can impose parametric assumptions such as the Cox-model, as done in \cite{Katzman.2018}.

\textbf{(ii) Orthogonal loss.}
For the second stage, we can use the same loss as in Eq.~\eqref{eq:orthogonal_loss} but where we define
\begin{equation}
  \xi_S(Z, \eta_t) = \frac{\mathbf{1}(\widetilde{T} \leq t, \Delta = 1)}{S_{\widetilde{T}}(X, A)G_{\widetilde{T}}(X, A)} - \int_0^t \frac{\mathbf{1}(\widetilde{T}\geq i) \lambda^S_{i}(X, A)}{S_i(X, A)G_{i}(X, A)} \diff i
\end{equation}
as well as
\begin{equation}\label{eq:eif_component_G_continuous}
  \xi_G(Z, \eta_t) = \frac{\mathbf{1}(\widetilde{T} \leq t, \Delta = 0)}{S_{\widetilde{T}}(X, A)G_{\widetilde{T}}(X, A)} - \int_0^t \frac{\mathbf{1}(\widetilde{T}\geq i) \lambda^G_{i}(X, A)}{S_i(X, A)G_{i}(X, A)} \diff i.
\end{equation}
Here, the integrals can be approximated using a numerical integration method.

\clearpage

\section{Extensions to further causal estimands}\label{app:estimands}

\subsection{Treatment-specific survival probability}

We aim to construct (weighted) orthogonal learners to estimate the survival probability
\begin{equation}
\mu_{a, t}(x) = \mathbb{P}(T(a) > t \mid X = x)
\end{equation}
specific for a fixed treatment $A=a$. We consider a weighting function $f(\widetilde{\eta}_t(X))$ that depends on the treatment-specific nuisance functions $\widetilde{\eta}_t(X) = (\pi_a(X), S_{t-1}(X, a), G_{t-1}(X, a))$, where $\pi_a(x) = \prob(A = a \mid X=x)$. Following the same derivation as in our main paper, we first define the corresponding weighted average treatment effect via
\begin{equation}
    \theta_{a, t, f} = \frac{\E[f(\widetilde{\eta}_t(X)) \mu_{a, t}(X)]}{\E[f(\widetilde{\eta}_t(X))]}.
\end{equation}
One can show that the efficient influence function of $\theta_{a, t, f}$ is given by
\begin{equation}
    \phi_{a, t, f}(Z, \eta_t) = \frac{\rho_a(Z, \eta_t)}{\E[f(\widetilde{\eta}_t(X))]} \left(\varphi_a(Z, \eta_t) - \theta_{a, t, f} \right),
\end{equation}
where
\begin{align}
    \rho_a(Z, \eta_t) &=  f(\widetilde{\eta}_t(X)) + \frac{\partial f}{\partial \pi_a}(\widetilde{\eta}_t(X)) (\mathbf{1}(A = a) - \pi_a(X)) \\
     &- \frac{\mathbf{1}(A = a)}{\pi_a(X)} \left(\frac{\partial f(\widetilde{\eta}_t(X))}{\partial S_{t-1}(\cdot, a) } 
    S_{t-1}(X, a) \xi_S(Z_a, \eta_{t-1}) \right. \\ &+ \left. \frac{\partial f(\widetilde{\eta}_t(X))}{\partial G_{t-1}(\cdot, a) }G_{t-1}(X, a) \xi_G(Z_a, \eta_{t-1}) \right)
\end{align}
and
\begin{equation}
 \varphi_a(Z, \eta_t) = S_t(X, a) - \frac{\mathbf{1}(A = a)\xi_S(Z_a, \eta_t)S_t(X, a) f(\widetilde{\eta}_t(X))}{\pi_a(X) \rho_a(Z, \eta_t)}
\end{equation}
and we used the notation $Z_a = (X, a, \widetilde{T}, \Delta^S, \Delta^G)$. In particular, 
\begin{equation}
  \xi_S(Z_a, \eta_t) = \sum_{i=0}^t \frac{\mathbf{1}(\widetilde{T} = i, \Delta^S = 1) - \mathbf{1}(\widetilde{T}\geq i) \lambda^S_{i}(X, a)}{S_i(X, a)G_{i-1}(X, a)}.
\end{equation}

The orthogonal loss is given by
\begin{equation}
\mathcal{L}_{f, a}(g, \eta_t) = \frac{1}{\E[f(\widetilde{\eta}_t(X))]} \E\left[\rho_a(Z, \eta_t)\left(\varphi_a(Z, \eta_t) - g(X) \right)^2\right].
\end{equation}

\subsection{(Restricted) mean survival times}
 For some $h \leq t_\mathrm{max}$, we consider
\begin{align}
\Bar{\mu}_h(x, a) = \mathbb{E}[T(a) \wedge h \mid X = x] = \sum_{t = 0}^{t_\mathrm{max}} \mathbb{P}(T \wedge h > t \mid X = x, A = a) = \sum_{t = 0}^{h} S_t(x, a)
\end{align}
and 
\begin{equation}
\Bar{\tau}_h(x) = \mathbb{E}[T(1) \wedge h - T(0) \wedge h| X = x] = \sum_{t = 0}^{h} S_t(x, 1) - S_t(x, 0) =  \sum_{t = 0}^{h} \tau_t(x).
\end{equation}
Our weighting function now depends on $h$ and is defined via $f(\widetilde{\eta}_h(X))$. To derive the orthogonal loss, we define the averaged causal quantity as
\begin{equation}
  \theta_{t, f} = \frac{\E[f(\widetilde{\eta}_h(X)) \Bar{\tau}_h(X)]}{\E[f(\widetilde{\eta}_h(X))]} = \sum_{t = 0}^h  \frac{\E[f(\widetilde{\eta}_h(X)) \tau_t(X)]}{\E[f(\widetilde{\eta}_h(X))]}  
\end{equation}

Using additivity, the efficient influence function is given by
\begin{equation}
    \phi_{t, f}(Z, \eta_t) = \frac{\rho(Z, \eta_h)}{\E[f(\widetilde{\eta}_h(X))]} \left(\Bar{\varphi}(Z, \eta_h) - \theta_{t, f} \right),
\end{equation}
where
\begin{align}
 \rho(Z, \eta_h) = &  f(\widetilde{\eta}_h(X)) + \frac{\partial f}{\partial \pi}(\widetilde{\eta}_h(X)) (A - \pi(X)) \\
     &- \left(\frac{A}{\pi(X)}+\frac{1-A}{1-\pi(X)} \right) \left(\frac{\partial f(\widetilde{\eta}_h(X))}{\partial S_{h-1}(\cdot, A) }
    S_{h-1}(X, A) \xi_S(Z, \eta_{h-1}) \right. \\ 
    & \left.+ \frac{\partial f(\widetilde{\eta}_h(X))}{\partial G_{h-1}(\cdot, A) }G_{h-1}(X, A) \xi_G(Z, \eta_{h-1}) \right)
\end{align}
and
\begin{equation}
  \Bar{\varphi}(Z, \eta_h) = \sum_{t=0}^h  S_t(X, 1) - S_t(X, 0) 
   - \frac{(A - \pi(X))\xi_S(Z, \eta_t)S_t(X, A) f(\widetilde{\eta}_h(X))}{\pi(X) (1 - \pi(X))\rho(Z, \eta_h)}.
\end{equation}
The orthogonal loss is thus given by
\begin{equation}
\mathcal{L}_{f}(g, \eta_h) = \frac{1}{\E[f(\widetilde{\eta}_h(X))]} \E\left[\rho(Z, \eta_h)\left(\Bar{\varphi}(Z, \eta_h) - g(X) \right)^2\right].
\end{equation}

\clearpage

\section{Extensions to unobserved ties}\label{app:ties}
We assume now that $\Delta^G$ is unobserved and we only observe $\Delta = \Delta^S$. In this case, it holds that
\begin{align}
    \lambda^G_t(x, a) &= \mathbb{P}(\widetilde{T}= t, \Delta^G = 1 \mid \widetilde{T} \geq t, X = x, A = a) \\
    & =\mathbb{P}(\widetilde{T}= t, \Delta = 0 \mid \widetilde{T} \geq t, X = x, A = a) + \mathbb{P}(T= t, C = t \mid \widetilde{T} \geq t, X = x, A = a)
\end{align}
and thus the censoring hazard is not identified from observational data as it depends on the unobserved $T$ and $C$ through $\mathbb{P}(T= t, C = t \mid \widetilde{T} \geq t, X = x, A = a)$. In the following, we proposed two methods that still allow us to apply our toolbox in this setting. 

\subsection*{Method 1} 

The term $\mathbb{P}(T= t, C = t \mid \widetilde{T} \geq t, X = x, A = a)$ quantifies the conditional probability of having a tie at time $t$. Given the independent censoring assumption, we may assume this probability will be small if $\mathcal{T}$ is sufficiently large, i.e., if we observe fine-grained time steps. Then, we can approximate
\begin{equation}
\lambda^G_t(x, a) \approx \mathbb{P}(\widetilde{T}= t, \Delta = 0 \mid \widetilde{T} \geq t, X = x, A = a)
\end{equation}
and we can apply our orthogonal loss from the main paper with $\Delta^G = 1 - \Delta$. In the extreme case where $\mathcal{T}$ is continuous, equality holds and we can ignore ties altogether.

\subsection*{Method 2}

Here, we reparametrize our orthogonal loss using identifiable nuisance functions even if $\Delta^G$ is not observed. First, we define the survival function of the observed time
\begin{equation}
   p_t(x, a) = \prob(\widetilde{T} > t \mid X=x, A = a)
\end{equation}
and note that $p_t(x, a) = S_t(x, a) G_t(x, a)$ because $\widetilde{T} = \min \{ T, C \}$ (independent censoring assumption). Then, we define new nuisance functions 
\begin{equation}
\Bar{\eta}_t(X) = \pi(X) \cup (\lambda^S_i(X, 1), \lambda^S_i(X, 0), p_i(X, 1), p_i(X, 0))_{i=0}^t.
\end{equation}
Note that the correction term $\xi_S(Z, \Bar{\eta}_t)$ from Eq.~\eqref{eq:eif_component_s} can now be written as
\begin{equation}
  \xi_S(Z, \Bar{\eta}_t) = \sum_{i=0}^t \frac{\mathbf{1}(\widetilde{T} = i, \Delta = 1) - \mathbf{1}(\widetilde{T}\geq i) \lambda^S_{i}(X, A)}{p_{i-1}(X, A)(1 - \lambda_i^S(X, A))}.
\end{equation}

We consider a weighting function $f(\widetilde{\eta}_t(X))$, where $\widetilde{\eta}_t(X) = (\pi(X), p_{t-1}(X, 1), p_{t-1}(X, 0))$. The efficient influence function of the weighted average treatment effect $\theta_{t, f}$ is given by
\begin{equation}
    \phi_{t, f}(Z, \Bar{\eta}_t) = \frac{ \Bar{\rho}(Z, \Bar{\eta})}{\E[f(\widetilde{\eta}_t(X))]} \left(\Bar{\varphi}(Z, \Bar{\eta}_t) - \theta_{t, f} \right),
\end{equation}
where
\begin{align}
 \Bar{\rho}(Z, \Bar{\eta}_t) &=  f(\widetilde{\eta}_t(X)) + \frac{\partial f}{\partial \pi}(\widetilde{\eta}_t(X)) (A - \pi(X)) \\
     &- \left(\frac{A}{\pi(X)}+\frac{1-A}{1-\pi(X)} \right) \left(\frac{\partial f(\widetilde{\eta}_t(X))}{\partial p_{t-1}(\cdot, A) } \left(\mathbf{1}(\widetilde{T} \leq t) + p_{t-1}(X, A) - 1 \right) \right)
\end{align}
and
\begin{equation}
\Bar{\varphi}(Z, \Bar{\eta}_t) = S_t(X, 1) - S_t(X, 0) - \frac{(A - \pi(X))\xi_S(Z, \Bar{\eta}_t)S_t(X, A) f(\widetilde{\eta}_t(X))}{\pi(X) (1 - \pi(X))\Bar{\rho}(Z, \Bar{\eta}_t)}.
\end{equation}
The orthogonal loss becomes
\begin{equation}\label{app:eq_orthogonal_ties}
\mathcal{L}_{f}(g, \Bar{\eta}_t) = \frac{1}{\E[f(\widetilde{\eta}_t(X))]} \E\left[\Bar{\rho}(Z, \Bar{\eta}_t)\left(\Bar{\varphi}(Z, \Bar{\eta}_t) - g(X) \right)^2\right].
\end{equation}
The orthogonal loss in Eq.~\eqref{app:eq_orthogonal_ties} requires us to estimate $p_t(x, a)$ instead of the censoring hazards $\lambda_t^G(x, a)$. Estimating $p_t(x, a)$ for all $t$ is a standard problem of estimating a discrete conditional c.d.f. For example, one could fit a multi-output MLP with softmax activation and cross-entropy loss to estimate $\prob(\widetilde{T} = t \mid X = x, A = a)$ and then $p_t(x, a) = 1 - \sum_{i=0}^t \prob(\widetilde{T} = t \mid X = x, A = a)$.

In the reparametrized nuisance setting, we obtain the following orthogonal survival learners: 
\begin{itemize}
\item (i)~\textbf{$\emptyset$-learner (Survival DR-learner)} corresponds to setting $f(\widetilde{\eta}_t(X)) = 1$; 
\item (ii) \textbf{\texttt{T}-learner (Survival R-learner)} corresponds to setting $f(\widetilde{\eta}_t(X)) = \pi(X) (1 - \pi(X))$, 
\item (iii)~\textbf{\texttt{S+C}-learner (survival-censoring weighting)} corresponds to setting $f(\widetilde{\eta}_t(X)) = p_{t-1}(X, 1) p_{t-1}(X, 0)$; and 
\item (iv)~\textbf{\texttt{T+C+S}-learner (full weighting)} corresponds to setting $f(\widetilde{\eta}_t(X)) = \pi(X) (1 - \pi(X)) p_{t-1}(X, 1) p_{t-1}(X, 0)$.
\end{itemize}

\clearpage

\section{Proofs}\label{app:proofs}
\allowdisplaybreaks
\subsection{Identifiability of the target estimand}
\begin{lemma}
Under Assumptions~\ref{ass:causal} and \ref{ass:survival}, the causal estimand $\tau_t(x)$ is identified from the observational data $Z$ via
\begin{equation}
    \tau_t(x) = \prod_{i =1}^t (1 - \lambda^S_i(x, 1)) - \prod_{i =1}^t (1 - \lambda^S_i(x, 0)).
\end{equation}
\end{lemma}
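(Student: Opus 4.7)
The plan is to split the identification into two independent reductions: (i)~a purely causal step that trades potential outcomes for an observed conditional survival function, and (ii)~a purely survival-analytic step that rewrites the latent-$T$ survival function as a product of observable cause-specific hazards.

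For step (i), I would fix $a \in \{0,1\}$ and $x \in \mathcal{X}$ and compute
\begin{align}
\mathbb{P}(T(a) > t \mid X = x)
  &= \mathbb{P}(T(a) > t \mid X = x, A = a) \\
  &= \mathbb{P}(T > t \mid X = x, A = a)
   = S_t(x, a),
\end{align}
where the first equality uses ignorability (Assumption~\ref{ass:causal}(iii)), the second uses consistency (Assumption~\ref{ass:causal}(i)), and treatment overlap (Assumption~\ref{ass:causal}(ii)) guarantees that the conditional distribution $\mathbb{P}(\cdot \mid X = x, A = a)$ is well defined on the support of $X$. Subtracting the two treatment arms gives $\tau_t(x) = S_t(x,1) - S_t(x,0)$.

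For step (ii), I need to re-express $S_t(x,a)$ in terms of the observed hazards $\lambda^S_i(x,a)$, which are defined on the censored data $(\widetilde{T}, \Delta^S)$ rather than on the latent $T$. The key identity is that, under non-informative censoring (Assumption~\ref{ass:survival}(iii)),
\begin{equation}
\lambda^S_i(x,a) = \mathbb{P}(T = i \mid T \geq i, X=x, A=a),
\end{equation}
which I would verify by rewriting the events $\{\widetilde{T} = i, \Delta^S = 1\} = \{T = i, C \geq i\}$ and $\{\widetilde{T} \geq i\} = \{T \geq i, C \geq i\}$, factoring the joint distribution of $(T,C)$ given $(X,A)$, and cancelling the common factor $\mathbb{P}(C \geq i \mid X,A) = G_{i-1}(x,a)$. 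Censoring overlap (Assumption~\ref{ass:survival}(i)) ensures that this factor is strictly positive for every $i \leq t$, so the cancellation is legitimate; survival overlap (Assumption~\ref{ass:survival}(ii)) ensures $\mathbb{P}(T \geq i \mid X,A) > 0$ so that $\lambda^S_i$ itself is well defined as a conditional probability.

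With this hazard identity in hand, I would finish by a standard telescoping argument on the discrete survival function:
\begin{equation}
S_t(x,a) = \prod_{i=0}^t \mathbb{P}(T > i \mid T \geq i, X=x, A=a) = \prod_{i=0}^t \bigl(1 - \lambda^S_i(x,a)\bigr),
\end{equation}
and then take the difference in $a$ to obtain the claim (with the indexing matched to the convention $S_{-1}\equiv 1$ used in the main text). I do not anticipate a genuine obstacle here; the only subtlety is being careful that every conditional probability that appears in the factorisation is well defined, which is exactly what the three overlap/positivity clauses in Assumptions~\ref{ass:causal} and \ref{ass:survival} are designed to guarantee.
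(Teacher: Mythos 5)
Your proposal is correct and follows essentially the same route as the paper: ignorability plus consistency to reduce $\mathbb{P}(T(a)>t\mid X=x)$ to $S_t(x,a)$, a telescoping product over the latent discrete hazard, and non-informative censoring together with the two survival-specific overlap conditions to equate $\mathbb{P}(T=i\mid T\geq i, X, A)$ with the observable $\lambda^S_i(x,a)$. If anything, your explicit event decomposition $\{\widetilde{T}=i,\Delta^S=1\}=\{T=i,C\geq i\}$ and the cancellation of $G_{i-1}(x,a)$ spells out the step the paper only asserts, and your product correctly starts at $i=0$ (matching Eq.~\eqref{eq:identification}; the lemma statement's lower index $i=1$ appears to be a typo).
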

\begin{proof}
We show w.l.o.g. identifiability for $\mu_t(x, a) = \mathbb{P}(T(a) > t \mid X = x)$. The result for $\tau_t(x)$ follows by taking the difference. It holds that
\begin{align}
\mathbb{P}(T(a) > t \mid X = x) & \overset{\textrm{(i)}}{=} \mathbb{P}(T(a) > t \mid X = x, A = a)\\
& \overset{\textrm{(ii)}}{=} \mathbb{P}(T > t \mid X = x, A = a) \\
& = \prod_{i=0}^t 1 - \mathbb{P}(T = i \mid T \geq i, X = x, A = a) \\
& \overset{\textrm{(iii)}}{=} \prod_{i=0}^t 1 - \mathbb{P}(\widetilde{T} = i, \Delta = 1 \mid \widetilde{T} \geq i, X = x, A = a) \\
&= \prod_{i=0}^t 1 - \lambda_i^S(x, a),
\end{align}
where (i) follows from ignorability and treatment overlap, (ii) from consistency, and (iii) from non-informative censoring, censoring, and survival overlap.

\end{proof}

\subsection{Proof of Theorem~\ref{thrm:orthogonal}}

We make use of the following lemma.

\begin{lemma}\label{lem:app_eif_component_zero}
Let $\xi_S(Z, \eta_t)$ and $\xi_G(Z, \eta_{t-1})$ be defined as in
the main paper. Then,
\begin{equation}
\E\left[\xi_S(Z, \eta_t) \mid X, A \right] = \E\left[\xi_G(Z, \eta_{t-1}) \mid X, A \right] = 0.
\end{equation}
\end{lemma}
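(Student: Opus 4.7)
The plan is to prove the identity summand-by-summand. For each fixed $i$, the denominators $S_i(X,A) G_{i-1}(X,A)$ are $\sigma(X,A)$-measurable and can therefore be pulled out of the conditional expectation $\E[\cdot \mid X, A]$. So it suffices to show that for every $i \in \{0, \dots, t\}$,
\begin{equation}
\E\!\left[\mathbf{1}(\widetilde{T} = i, \Delta^S = 1) - \mathbf{1}(\widetilde{T}\geq i)\,\lambda^S_i(X,A) \,\Big|\, X, A\right] = 0,
\end{equation}
and analogously for the $\xi_G$ summands.

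The key input is simply the definition of the discrete hazard. By definition,
\begin{equation}
\lambda^S_i(X,A) = \prob\!\left(\widetilde{T} = i, \Delta^S = 1 \,\big|\, \widetilde{T} \geq i, X, A\right),
\end{equation}
so multiplying both sides by $\prob(\widetilde{T} \geq i \mid X, A)$ and using the tower rule yields
\begin{equation}
\E\!\left[\mathbf{1}(\widetilde{T} = i, \Delta^S = 1) \,\big|\, X, A\right] = \lambda^S_i(X,A)\,\prob(\widetilde{T}\geq i \mid X, A) = \E\!\left[\mathbf{1}(\widetilde{T}\geq i)\,\lambda^S_i(X,A) \,\big|\, X, A\right],
\end{equation}
where the last equality uses that $\lambda^S_i(X,A)$ is $\sigma(X,A)$-measurable. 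Subtracting gives the per-summand identity, and summing over $i$ yields $\E[\xi_S(Z, \eta_t) \mid X, A] = 0$. The argument for $\xi_G$ is identical after replacing $\Delta^S$ by $\Delta^G$ and $\lambda^S_i$ by $\lambda^G_i$, and using the hazard definition for $G$.

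There is essentially no obstacle beyond bookkeeping: one just has to handle the $i=0$ case gracefully (the conventions $S_{-1}(x,a) = G_{-1}(x,a) = 1$ take care of the denominators there, and $\mathbf{1}(\widetilde{T}\geq 0) \equiv 1$), and observe that the independent-censoring assumption is not actually needed for this lemma -- the identity is purely a consequence of how the hazard function is defined in terms of the observable law of $(\widetilde{T}, \Delta^S, \Delta^G) \mid X, A$.
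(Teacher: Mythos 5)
Your proof is correct and follows essentially the same route as the paper: pull the $\sigma(X,A)$-measurable denominators out of the conditional expectation and observe that, by the definition of the discrete hazard, $\E[\mathbf{1}(\widetilde{T}=i,\Delta^S=1)\mid X,A] = \lambda^S_i(X,A)\,\prob(\widetilde{T}\geq i\mid X,A)$, so each summand's numerator has zero conditional mean. Your side remark that non-informative censoring is not needed here is accurate, since the identity is purely a property of the observable law of $(\widetilde{T},\Delta^S,\Delta^G)$ given $(X,A)$.
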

\begin{proof}
We have
\begin{align}
     \E\left[\xi_S(Z, \eta_t) \mid X, A \right] &= \sum_{i=0}^t \E\left[ \frac{\mathbf{1}(\widetilde{T} = i, \Delta^S = 1) - \mathbf{1}(\widetilde{T}\geq i) \lambda^S_{i}(X, A)}{S_i(X, A)G_{i-1}(X, A)} \bigg\vert X, A \right] \\
     & = \sum_{i=0}^t \frac{\prob(\widetilde{T} = i, \Delta^S = 1 \mid X, A) - \prob(\widetilde{T}\geq i \mid X, A) \lambda^S_{i}(X, A)}{S_i(X, A)G_{i-1}(X, A)} \\
     & = \sum_{i=0}^t \frac{\prob(\widetilde{T} = i, \Delta^S = 1 \mid X, A) - \prob(\widetilde{T} = i, \Delta^S = 1 \mid X, A)}{S_i(X, A)G_{i-1}(X, A)} = 0.
\end{align}
The result for $\xi_G(Z, \eta_{t-1})$ follows analogously.
\end{proof}

Now we turn to the actual proof of Theorem~\ref{thrm:orthogonal}.

\begin{proof}[\textbf{Proof of Theorem~\ref{thrm:orthogonal}}]
By taking the first-order directional derivative \cite{Foster.2023}, we obtain
\begin{align}
& D_g \mathcal{L}_{f}(g, \eta_t) [\hat{g} - g] \\
= &  \frac{1}{\E[f(\widetilde{\eta}_t(X))]} \frac{\diff}{\diff t} \left[\E\left[\rho(Z, \eta_t)\left(\varphi(Z, \eta_t) - \{g(X) + t(\hat{g}(X) - g(X))\}\right)^2\right] \right]_{t=0} \\
= &\frac{-2}{\E[f(\widetilde{\eta}_t(X))]} \E\left[ \rho(Z, \eta_t)\left(\varphi(Z, \eta_t) - g(X)\right)(\hat{g}(X) - g(X))\right]\\
= &\frac{-2}{\E[f(\widetilde{\eta}_t(X))]} \E\left[(\hat{g}(X) - g(X)) \left\{\rho(Z, \eta_t) \left(S_t(X, 1) - S_t(X, 0) - g(X) \right) \right. \right. \\
&\left. \left. - \frac{(A - \pi(X)) S_t(X, A) f(\widetilde{\eta}(X)))}{\pi(X)(1 - \pi(X))}\right\}\right] \\
= &\frac{-2}{\E[f(\widetilde{\eta}_t(X))]} \E\left[(\hat{g}(X) - g(X)) \left\{f(\widetilde{\eta}_t(X)) \left( S_t(X, 1) - S_t(X, 0) - g(X) \right. \right. \right.\\
& \left. \left. \left. \quad - \frac{A}{\pi(X)}S_t(X, 1) \xi_S(Z, \eta_t)  + \frac{1-A}{1-\pi(X)}S_t(X, 0) \xi_S(Z, \eta_t) \right) \right. \right.\\
& \left. \left. + \left( \frac{\partial f(\widetilde{\eta}_t(X))}{\partial \pi} (A - \pi(X)) - \frac{A}{\pi(X)} \left(\frac{\partial f(\widetilde{\eta}_t(X))}{\partial S_{t-1}(\cdot, 1) } S_{t-1}(X, 1) \xi_S(Z, \eta_t) + \frac{\partial f(\widetilde{\eta}_t(X))}{\partial G_{t-1}(\cdot, 1) } G_{t-1}(X, 1) \xi_G(Z, \eta_{t-1}) \right) \right. \right. \right. \\
& - \left. \left. \left. \left(\frac{1-A}{1-\pi(X)}\right)\left(\frac{\partial f(\widetilde{\eta}_t(X))}{\partial S_{t-1}(\cdot, 0) } S_{t-1}(X, 0) \xi_S(Z, \eta_{t-1}) + \frac{\partial f(\widetilde{\eta}_t(X))}{\partial G_{t-1}(\cdot, 0) } G_{t-1}(X, 0) \xi_G(Z, \eta_{t-1}) \right)\right) \right. \right.\\
& \left. \left. \quad \left(S_t(X, 1) - S_t(X, 0) - g(X) \right)\right\}\right].
\end{align}
To show orthogonality, we have to show that the second-order directional derivatives w.r.t. to the nuisance functions are zero. We start by computing the derivative w.r.t. the propensity score, resulting in
\begin{align}
& D_\pi D_g \mathcal{L}_{f}(g, \eta_t) [\hat{g} - g, \hat{\pi} - \pi] \\
= &  \frac{\diff}{\diff t} \left[D_g \mathcal{L}_{f}(g, \cdots, \pi + t(\hat{\pi} - \pi)) [\hat{g} - g] \right]_{t=0} \\
= &\frac{-2}{\E[f(\widetilde{\eta}_t(X))]} \E\left[(\hat{g}(X) - g(X)) (\hat{\pi}(X) - \pi(X)) \left\{f(\widetilde{\eta}(X)) \left(\frac{A}{\pi(X)^2}S_t(X, 1) \xi_S(Z, \eta_t) \right. \right.\right. \\
& \left. \left. \left.+ \frac{1-A}{(1-\pi(X))^2}S_t(X, 0) \xi_S(Z, \eta_t) \right) \right. \right.\\
& + \frac{\partial f(\widetilde{\eta}_t(X))}{\partial \pi} \left( S_t(X, 1) - S_t(X, 0) - g(X) - \frac{A}{\pi(X)}S_t(X, 1) \xi_S(Z, \eta_t)  + \frac{1-A}{1-\pi(X)}S_t(X, 0) \xi_S(Z, \eta_t) \right) \\
& + \left[\frac{\partial^2 f(\widetilde{\eta}_t(X))}{\partial^2 \pi} (A - \pi(X)) - \frac{\partial f(\widetilde{\eta}_t(X))}{\partial \pi} \right.\\
&\left. + \frac{A}{\pi(X)^2} \left(\frac{\partial f(\widetilde{\eta}_t(X))}{\partial S_{t-1}(\cdot, 1) } S_{t-1}(X, 1) \xi_S(Z, \eta_t) + \frac{\partial f(\widetilde{\eta}_t(X))}{\partial G_{t-1}(\cdot, 1) } G_{t-1}(X, 1) \xi_G(Z, \eta_{t-1}) \right) \right. \\
& - \frac{A}{\pi(X)} \left(\frac{\partial^2 f(\widetilde{\eta}_t(X))}{\partial \pi \partial S_{t-1}(\cdot, 1) } S_{t-1}(X, 1) \xi_S(Z, \eta_t) + \frac{\partial^2 f(\widetilde{\eta}_t(X))}{\partial \pi \partial G_{t-1}(\cdot, 1) } G_{t-1}(X, 1) \xi_G(Z, \eta_{t-1}) \right)\\
& - \frac{1-A}{(1-\pi(X))^2} \left(\frac{\partial f(\widetilde{\eta}_t(X))}{\partial S_{t-1}(\cdot, 1) } S_{t-1}(X, 1) \xi_S(Z, \eta_t) + \frac{\partial f(\widetilde{\eta}_t(X))}{\partial G_{t-1}(\cdot, 1) } G_{t-1}(X, 1) \xi_G(Z, \eta_{t-1}) \right) \\
& - \left. \left. \left. \left(\frac{1-A}{1-\pi(X)}\right)\left(\frac{\partial^2 f(\widetilde{\eta}_t(X))}{\partial \pi \partial S_{t-1}(\cdot, 0) } S_t(X, 0) \xi_S(Z, \eta_t) + \frac{\partial^2 f(\widetilde{\eta}_t(X))}{\partial \pi \partial G_{t-1}(\cdot, 0) } G_{t-1}(X, 0) \xi_G(Z, \eta_{t-1}) \right)\right] \right. \right. \\
& \quad \left. \left.\left(S_t(X, 1) - S_t(X, 0) - g(X) \right)\right\}\right] \\
& +  \frac{2 D_g \mathcal{L}_{f}(g, \eta_t) [\hat{g} - g]}{\E[f(\widetilde{\eta}_t(X))]^2} \E\left[\frac{\partial f(\widetilde{\eta}_t(X))}{\partial \pi}\right] \\
\overset{(\ast)}{=} &\frac{-2}{\E[f(\widetilde{\eta}_t(X))]} \E\left[(\hat{g}(X) - g(X)) (\hat{\pi}(X) - \pi(X)) \left\{ \frac{\partial f(\widetilde{\eta}_t(X))}{\partial \pi} \left( S_t(X, 1) - S_t(X, 0) - g(X) \right)\right. \right.\\
& \left. \left. + \left[ - \frac{\partial f(\widetilde{\eta}_t(X))}{\partial \pi} \right] \left(S_t(X, 1) - S_t(X, 0) - g(X) \right)\right\}\right] \\
&= 0,
\end{align}
where in $(\ast)$ we applied the law of total expectation and Lemma~\ref{lem:app_eif_component_zero} to remove all terms including any of $\xi_S(Z, \eta_t)$, $\xi_G(Z, \eta_{t-1})$, or $A - \pi(X)$ and the same argument to show that $D_g \mathcal{L}_{f}(g, \eta_t) [\hat{g} - g] = 0$. This shows orthogonality w.r.t. to the propensity score.

To show orthogonality w.r.t. survival and censoring hazards, note that
\begin{align}
D_{\lambda^S_j(\cdot, a)} S_k(X, a)[\hat{\lambda}^S_j(\cdot, a) - \lambda^S_j(\cdot, a)] &= - \mathbf{1}(j \leq k)(\hat{\lambda}^S_j(X, a) - \lambda^S_j(X, a)) \prod_{i \neq j}^k (1 - \lambda^S_j(X, a)) \\
&=  -(\hat{\lambda}^S_j(X, a) - \lambda^S_j(X, a)) \frac{\mathbf{1}(j \leq k) S_k(X, a)}{1 - \lambda^S_j(X, a)}
\end{align}
and also that
\begin{align}
&D_{\lambda^S_j(\cdot, a)} \xi_S(Z_a, \eta_t) [\hat{\lambda}^S_j(\cdot, a) - \lambda^S_j(\cdot, a)] \\
=& (\hat{\lambda}^S_j(X, a) - \lambda^S_j(X, a)) \sum_{i=0}^t - \frac{\mathbf{1}(i = j, \widetilde{T} \geq i)}{S_i(X, a)G_{i-1}(X, a)} + \frac{\left(\mathbf{1}(\widetilde{T} = i, \Delta^S = 1) - \mathbf{1}(\widetilde{T}\geq i) \lambda^S_{i}(X, A)\right) \mathbf{1}(j \leq i)}{S_i(X, a)G_{i-1}(X, a)(1 - \lambda^S_j(X, a))} \\
= & (\hat{\lambda}^S_j(X, a) - \lambda^S_j(X, a))\left( - \frac{\mathbf{1}(\widetilde{T} \geq j)}{S_j(X, a)G_{j-1}(X, a)} + \sum_{i=j}^t\frac{\left(\mathbf{1}(\widetilde{T} = i, \Delta^S = 1) - \mathbf{1}(\widetilde{T}\geq i) \lambda^S_{i}(X, A)\right)}{S_i(X, a)G_{i-1}(X, a)(1 - \lambda^S_j(X, a))} \right)\\
= & (\hat{\lambda}^S_j(X, a) - \lambda^S_j(X, a))\left( - \frac{\mathbf{1}(\widetilde{T} \geq j)}{S_j(X, a)G_{j-1}(X, a)} + \frac{\widetilde{\xi}_S(Z_a, \eta_t)}{1 -\lambda^S_j(X, a)} \right).
\end{align}
Here, it holds that $\E[\widetilde{\xi}_S(Z_a, \eta_t) \mid X, A] = 0$ following the same arguments as Lemma~\ref{lem:app_eif_component_zero}.

For the cross-term, we obtain that
\begin{align}
&D_{\lambda^S_j(\cdot, a)} \xi_G(Z_a, \eta_{t-1}) [\hat{\lambda}^S_j(\cdot, a) - \lambda^S_j(\cdot, a)] \\
=& (\hat{\lambda}^S_j(X, a) - \lambda^S_j(X, a)) \sum_{i=0}^{t-1} \frac{\left(\mathbf{1}(\widetilde{T} = i, \Delta^G = 1) - \mathbf{1}(\widetilde{T}\geq i) \lambda^G_{i}(X, A)\right) \mathbf{1}(j \leq i-1)}{S_{i-1}(X, a)G_{i}(X, a)(1 - \lambda^S_j(X, a))} \\
= & (\hat{\lambda}^S_j(X, a) - \lambda^S_j(X, a))\left( \frac{\Bar{\xi}_G(Z_a, \eta_{t-1})}{1 -\lambda^S_j(X, a)} \right)
\end{align}
with $\E[\Bar{\xi}_G(Z_a, \eta_{t-1}) \mid X, A] = 0$.

Using these calculation, we show orthogonality w.r.t. survival hazard functions via
\begin{align}
& D_{\lambda^S_j(\cdot, 1)} D_g \mathcal{L}_{f}(g, \eta_t) [\hat{g} - g, \hat{\lambda}^S_j(\cdot, 1) - \lambda^S_j(\cdot, 1)] \\
= &  \frac{\diff}{\diff t} \left[D_g \mathcal{L}_{f}(g, \cdots, \lambda^S_j(\cdot, 1) + t(\hat{\lambda}^S_j(\cdot, 1) - \lambda^S_j(\cdot, 1))) [\hat{g} - g] \right]_{t=0} \\
= &\frac{-2}{\E[f(\widetilde{\eta}_t(X))]} \E\left[(\hat{g}(X) - g(X)) (\hat{\lambda}^S_j(X, 1) - \lambda^S_j(X, 1)) \left\{\frac{\partial f(\widetilde{\eta}_t(X))}{\partial \lambda^S_j(\cdot, 1)} \bigg( S_t(X, 1) - S_t(X, 0) - g(X) \right. \right. \\
&  - \frac{A}{\pi(X)}S_t(X, 1) \xi_S(Z, \eta_t)  + \frac{1-A}{1-\pi(X)}S_t(X, 0) \xi_S(Z, \eta_t) \bigg) + f(\widetilde{\eta}_t(X)) \Bigg[\frac{- S_t(X, 1)}{1 - \lambda^S_j(X, 1)} \\
& - \frac{A}{\pi(X)}\left(\frac{- S_t(X, 1) \xi_S(Z, \eta_t)}{1 - \lambda^S_j(X, 1)} + S_t(X, 1) \left(\frac{-\mathbf{1}(\widetilde{T} \geq j)}{S_j(X, 1)G_{j-1}(X, 1)} + \frac{\widetilde{\xi}_S(Z_a, \eta_t)}{1 -\lambda^S_j(X, 1)}  \right) \right) \Bigg] \\
& + \Bigg[ \frac{\partial^2 f(\widetilde{\eta}_t(X))}{\partial \lambda^S_j(\cdot, 1) \partial \pi} (A - \pi(X)) - \frac{A}{\pi(X)} \left(\frac{\partial^2 f(\widetilde{\eta}_t(X))}{\partial \lambda^S_j(\cdot, 1) \partial S_{t-1}(\cdot, 1) } S_{t-1}(X, 1) \xi_S(Z, \eta_t) \right. \\
& + \frac{\partial f(\widetilde{\eta}_t(X))}{\partial S_{t-1}(\cdot, 1) } \left(\frac{- S_{t-1}(X, 1) \xi_S(Z, \eta_{t-1})}{1 - \lambda^S_j(X, 1)} + S_{t-1}(X, 1) \left(\frac{-\mathbf{1}(\widetilde{T} \geq j)}{S_j(X, 1)G_{j-1}(X, 1)} + \frac{\widetilde{\xi}_S(Z_a, \eta_{t-1})}{1 -\lambda^S_j(X, 1)}  \right) \right)\\
& \left. \left. \left. + \frac{\partial^2 f(\widetilde{\eta}_t(X))}{\partial \lambda^S_j(\cdot, 1) \partial G_{t-1}(\cdot, 1) } G_{t-1}(X, 1) \xi_G(Z, \eta_{t-1}) + \frac{\partial f(\widetilde{\eta}_t(X))}{\partial G_{t-1}(\cdot, 1) } G_{t-1}(X, 1) \left( \frac{\Bar{\xi}_G(Z_a, \eta_{t-1})}{1 -\lambda^S_j(X, a)} \right) \right) \Bigg] \right. \right.\\
& \quad \left. \left. \left(S_t(X, 1) - S_t(X, 0) - g(X) \right)\right\}\right] \\
& +  \frac{2 D_g \mathcal{L}_{f}(g, \eta_t) [\hat{g} - g]}{\E[f(\widetilde{\eta}_t(X))]^2} \E\left[\frac{\partial f(\widetilde{\eta}_t(X))}{\partial \lambda^S_j(\cdot, 1)}\right] \\
\overset{(\ast)}{=} &\frac{-2}{\E[f(\widetilde{\eta}_t(X))]} \E\left[(\hat{g}(X) - g(X)) (\hat{\lambda}^S_j(X, 1) - \lambda^S_j(X, 1)) \left\{\frac{\partial f(\widetilde{\eta}_t(X))}{\partial \lambda^S_j(\cdot, 1)} \left( S_t(X, 1) - S_t(X, 0) - g(X)  \right)\right. \right. \\
& + f(\widetilde{\eta}_t(X)) \Bigg[\frac{- S_t(X, 1)}{1 - \lambda^S_j(X, 1)} - \frac{A}{\pi(X)}\left(S_t(X, 1) \left(\frac{-\mathbf{1}(\widetilde{T} \geq j)}{S_j(X, 1)G_{j-1}(X, 1)}\right) \right) \Bigg] \\
& \left. \left. + \left[ - \frac{A}{\pi(X)} \frac{\partial f(\widetilde{\eta}_t(X))}{\partial S_{t-1}(\cdot, 1) } S_{t-1}(X, 1) \left(\frac{-\mathbf{1}(\widetilde{T} \geq j)}{S_j(X, 1)G_{j-1}(X, 1)}  \right) \right] \left(S_t(X, 1) - S_t(X, 0) - g(X) \right)\right\}\right] \\
= &\frac{-2}{\E[f(\widetilde{\eta}_t(X))]} \E\left[(\hat{g}(X) - g(X)) (\hat{\lambda}^S_j(X, 1) - \lambda^S_j(X, 1)) \right.\\
& \left. \left\{\frac{\partial f(\widetilde{\eta}_t(X))}{\partial S_{t-1}(\cdot, 1)} \frac{\partial S_{t-1}(X, 1)}{\partial \lambda^S_j(\cdot, 1)} \left( S_t(X, 1) - S_t(X, 0) - g(X)  \right)\right. \right. \\
& + f(\widetilde{\eta}_t(X)) \Bigg[\frac{- S_t(X, 1)}{1 - \lambda^S_j(X, 1)} + \left(S_t(X, 1) \left(\frac{1}{1 - \lambda^S_j(X, 1)}\right) \right) \Bigg] \\
& \left. \left. + \left[ - \frac{\partial f(\widetilde{\eta}_t(X))}{\partial S_{t-1}(\cdot, 1) } \frac{\partial S_{t-1}(X, 1)}{\partial \lambda^S_j(\cdot, 1)} \right] \left(S_t(X, 1) - S_t(X, 0) - g(X) \right)\right\}\right] \\
& = 0,
\end{align}
where in $(\ast)$ we applied the law of total expectation and Lemma~\ref{lem:app_eif_component_zero} to remove all terms including any of $\xi_S(Z, \eta_t)$, $\widetilde{\xi}_S(Z, \eta_t)$, $\Bar{\xi}_G(Z, \eta_{t-1})$, or $A - \pi(X)$ and the same argument to show that $D_g \mathcal{L}_{f}(g, \eta_t) [\hat{g} - g] = 0$. We can apply an analogous argument for $\lambda^S_j(\cdot, 0)$ and obtain
\begin{align}
& D_{\lambda^S_j(\cdot, 0)} D_g \mathcal{L}_{f}(g, \eta_t) [\hat{g} - g, \hat{\lambda}^S_j(\cdot, 0) - \lambda^S_j(\cdot, 0)] \\
= &  \frac{\diff}{\diff t} \left[D_g \mathcal{L}_{f}(g, \cdots, \lambda^S_j(\cdot, 0) + t(\hat{\lambda}^S_j(\cdot, 0) - \lambda^S_j(\cdot, 0))) [\hat{g} - g] \right]_{t=0} \\
&= 0,
\end{align}
which proves orthogonality w.r.t. survival hazard functions.

To show orthogonality w.r.t. censoring hazard functions. note that
\begin{equation}
D_{\lambda^G_j(\cdot, a)} G_k(X, a)[\hat{\lambda}^G_j(\cdot, a) - \lambda^G_j(\cdot, a)] 
=  -(\hat{\lambda}^G_j(X, a) - \lambda^G_j(X, a)) \frac{\mathbf{1}(j \leq k) G_k(X, a)}{1 - \lambda^G_j(X, a)}
\end{equation}
and also that
\begin{equation}
D_{\lambda^G_j(\cdot, a)} \xi_G(Z_a, \eta_{t-1}) [\hat{\lambda}^G_j(\cdot, a) - \lambda^G_j(\cdot, a)] =(\hat{\lambda}^G_j(X, a) - \lambda^G_j(X, a))\left( - \frac{\mathbf{1}(\widetilde{T} \geq j)}{G_j(X, a)S_{j-1}(X, a)} + \frac{\widetilde{\xi}_G(Z_a, \eta_{t-1})}{1 -\lambda^G_j(X, a)} \right)
\end{equation}
with $\E[\widetilde{\xi}_G(Z_a, \eta_{t-1}) \mid X, A] = 0$ following the same arguments as Lemma~\ref{lem:app_eif_component_zero}. Furthermore,
\begin{equation}
D_{\lambda^G_j(\cdot, a)} \xi_S(Z_a, \eta_t) [\hat{\lambda}^G_j(\cdot, a) - \lambda^G_j(\cdot, a)] = (\hat{\lambda}^G_j(X, a) - \lambda^G_j(X, a))\left( \frac{\Bar{\xi}_S(Z_a, \eta_t)}{1 -\lambda^G_j(X, a)} \right)
\end{equation}
with $\E[\Bar{\xi}_S(Z_a, \eta_t) \mid X, A] = 0$.

Hence, for the second-order directional derivative w.r.t. censoring survival functions, we obtain
\begin{align}
& D_{\lambda^G_j(\cdot, 1)} D_g \mathcal{L}_{f}(g, \eta_t) [\hat{g} - g, \hat{\lambda}^G_j(\cdot, 1) - \lambda^G_j(\cdot, 1)] \\
= &  \frac{\diff}{\diff t} \left[D_g \mathcal{L}_{f}(g, \cdots, \lambda^G_j(\cdot, 1) + t(\hat{\lambda}^G_j(\cdot, 1) - \lambda^G_j(\cdot, 1))) [\hat{g} - g] \right]_{t=0} \\
= &\frac{-2}{\E[f(\widetilde{\eta}_t(X))]} \E\Bigg[(\hat{g}(X) - g(X)) (\hat{\lambda}^G_j(X, 1) - \lambda^G_j(X, 1)) \Bigg\{\frac{\partial f(\widetilde{\eta}_t(X))}{\partial \lambda^G_j(\cdot, 1)} \bigg( S_t(X, 1) - S_t(X, 0) - g(X)  \\
&  - \frac{A}{\pi(X)}S_t(X, 1) \xi_S(Z, \eta_t)  + \frac{1-A}{1-\pi(X)}S_t(X, 0) \xi_S(Z, \eta_t) \bigg) \\
& + f(\widetilde{\eta}_t(X)) \Bigg[- \frac{A}{\pi(X)}\left(S_t(X, 1) \left(\frac{\Bar{\xi}_S(Z_a, \eta_t)}{1 -\lambda^G_j(X, 1)}  \right) \right) \Bigg] \\
& + \Bigg[ \frac{\partial^2 f(\widetilde{\eta}_t(X))}{\partial \lambda^G_j(\cdot, 1) \partial \pi} (A - \pi(X)) - \frac{A}{\pi(X)} \left(\frac{\partial^2 f(\widetilde{\eta}_t(X))}{\partial \lambda^G_j(\cdot, 1) \partial S_{t-1}(\cdot, 1) } S_{t-1}(X, 1) \xi_S(Z, \eta_{t-1}) \right. \\
& + \frac{\partial f(\widetilde{\eta}_t(X))}{\partial S_{t-1}(\cdot, 1) } \left(S_{t-1}(X, 1)  \frac{\Bar{\xi}_S(Z_a, \eta_{t-1})}{1 -\lambda^G_j(X, 1)}  \right)  + \frac{\partial^2 f(\widetilde{\eta}_t(X))}{\partial \lambda^G_j(\cdot, 1) \partial G_{t-1}(\cdot, 1) } G_{t-1}(X, 1) \xi_G(Z, \eta_{t-1}) \\
&  \left.+\frac{\partial f(\widetilde{\eta}_t(X))}{\partial G_{t-1}(\cdot, 1) } \left(\frac{- G_{t-1}(X, 1) \xi_G(Z, \eta_{t-1})}{1 - \lambda^G_j(X, 1)} + G_{t-1}(X, 1) \left(\frac{-\mathbf{1}(\widetilde{T} \geq j)}{G_j(X, 1)S_{j-1}(X, 1)} + \frac{\widetilde{\xi}_G(Z_a, \eta_{t-1})}{1 -\lambda^G_j(X, 1)}  \right) \right) \right) \Bigg] \\
&  \cdot \left(S_t(X, 1) - S_t(X, 0) - g(X) \right)\Bigg\}\Bigg]  +  \frac{2 D_g \mathcal{L}_{f}(g, \eta_t) [\hat{g} - g]}{\E[f(\widetilde{\eta}_t(X))]^2} \E\left[\frac{\partial f(\widetilde{\eta}_t(X))}{\partial \lambda^G_j(\cdot, 1)}\right] \\
\overset{(\ast)}{=} &\frac{-2}{\E[f(\widetilde{\eta}_t(X))]} \E\left[(\hat{g}(X) - g(X)) (\hat{\lambda}^G_j(X, 1) - \lambda^G_j(X, 1)) \left\{\frac{\partial f(\widetilde{\eta}_t(X))}{\partial \lambda^G_j(\cdot, 1)} \left( S_t(X, 1) - S_t(X, 0) - g(X)  \right)\right. \right. \\
& \left. \left. + \left[ - \frac{A}{\pi(X)} \frac{\partial f(\widetilde{\eta}_t(X))}{\partial G_{t-1}(\cdot, 1) } G_{t-1}(X, 1) \left(\frac{-\mathbf{1}(\widetilde{T} \geq j)}{G_j(X, 1)S_{j-1}(X, 1)}  \right) \right] \left(S_t(X, 1) - S_t(X, 0) - g(X) \right)\right\}\right] \\
= &\frac{-2}{\E[f(\widetilde{\eta}_t(X))]} \E\left[(\hat{g}(X) - g(X)) (\hat{\lambda}^G_j(X, 1) - \lambda^G_j(X, 1)) \left\{\frac{\partial f(\widetilde{\eta}_t(X))}{\partial G_{t-1}(\cdot, 1)} \frac{\partial G_{t-1}(X, 1)}{\partial \lambda^G_j(\cdot, 1)} \left( S_t(X, 1) - S_t(X, 0) - g(X)  \right)\right. \right. \\
& \left. \left. + \left[ - \frac{\partial f(\widetilde{\eta}_t(X))}{\partial G_{t-1}(\cdot, 1) } \frac{\partial G_{t-1}(X, 1)}{\partial \lambda^G_j(\cdot, 1)} \right] \left(S_t(X, 1) - S_t(X, 0) - g(X) \right)\right\}\right] \\
& = 0,
\end{align}
where in $(\ast)$ we applied the law of total expectation and Lemma~\ref{lem:app_eif_component_zero} to remove all terms including any of $\xi_S(Z, \eta_t)$, $\widetilde{\xi}_S(Z, \eta_t)$, $\Bar{\xi}_G(Z, \eta_{t-1})$, or $A - \pi(X)$ and the same argument to show that $D_g \mathcal{L}_{f}(g, \eta_t) [\hat{g} - g] = 0$.
Finally, we can apply the same line of arguments to $\lambda^G_j(\cdot, 0)$ to show
\begin{align}
& D_{\lambda^G_j(\cdot, 0)} D_g \mathcal{L}_{f}(g, \eta_t) [\hat{g} - g, \hat{\lambda}^G_j(\cdot, 0) - \lambda^G_j(\cdot, 0)] \\
= &  \frac{\diff}{\diff t} \left[D_g \mathcal{L}_{f}(g, \cdots, \lambda^G_j(\cdot, 0) + t(\hat{\lambda}^G_j(\cdot, 0) - \lambda^G_j(\cdot, 0))) [\hat{g} - g] \right]_{t=0} \\
= & 0,
\end{align}
which completes the proof.
\end{proof}

\subsection{Proof of Theorem~\ref{thrm:minimizer}}
\begin{proof}
We write the orthogonal loss as
\begin{align}
\mathcal{L}_{f}(g, \eta_t) &= \frac{1}{\E[f(\widetilde{\eta}_t(X))]} \E\left[\rho(Z, \eta_t)\left(\varphi(Z, \eta_t) - g(X) \right)^2\right]\\
&= \frac{1}{\E[f(\widetilde{\eta}_t(X))]} \E\left[\rho(Z, \eta_t)\left(\varphi(Z, \eta_t) - \tau_t(X) + \tau_t(X) - g(X) \right)^2\right]\\
&= \frac{1}{\E[f(\widetilde{\eta}_t(X))]} \left(
\E\left[\rho(Z, \eta_t)\left(\varphi(Z, \eta_t) - \tau_t(X)\right)^2\right] + 2 \E\left[\rho(Z, \eta_t)\left(\varphi(Z, \eta_t) - \tau_t(X)\right)\tau_t(X)\right]\right)\\
& \quad +  \frac{1}{\E[f(\widetilde{\eta}_t(X))]} \left(- 2 \E\left[\rho(Z, \eta_t)\left(\varphi(Z, \eta_t) - \tau_t(X)\right)g(X)\right] + 
\E\left[\rho(Z, \eta_t)\left(\tau_t(X) - g(X)\right)^2\right] \right),
\end{align}
where only the terms in the last equation depend on $g$. The first term we can rewrite as
\begin{align}
    &\E\left[\rho(Z, \eta_t)\left(\varphi(Z, \eta_t) - \tau_t(X)\right)g(X)\right] \\
    = & \E\left[\rho(Z, \eta_t)\left(S_t(X, 1) - S_t(X, 0) - \frac{(A - \pi(X))\xi_S(Z, \eta_t)S_t(X, A) f(\widetilde{\eta}_t(X))}{\pi(X) (1 - \pi(X))\rho(Z, \eta_t)} - \tau_t(X)\right)g(X)\right] \\
    = & - \E\left[\E\left[\frac{(A - \pi(X))\xi_S(Z, \eta_t)S_t(X, A) f(\widetilde{\eta}_t(X))}{\pi(X) (1 - \pi(X))}g(X) \Big\vert X, A \right]\right] \\
    = & - \E\left[\frac{(A - \pi(X)) \E[\xi_S(Z, \eta_t)\mid X, A] S_t(X, A) f(\widetilde{\eta}_t(X))}{\pi(X) (1 - \pi(X))}g(X) \right]\\
    \overset{(\ast)}{=} & 0,
\end{align}
where $(\ast)$ follows from Lemma~\ref{lem:app_eif_component_zero}.
For the second term, note that
\begin{align}
  & \E\left[\rho(Z, \eta_t) \mid X\right] \\
   = &\E\left[f(\widetilde{\eta}_t(X)) + \frac{\partial f}{\partial \pi}(\widetilde{\eta}_t(X)) (A - \pi(X))\right. \\
    & \left.- \left(\frac{A}{\pi(X)}+\frac{1-A}{1-\pi(X)} \right) \left(\frac{\partial f(\widetilde{\eta}_t(X))}{\partial S_{t-1}(\cdot, A) } S_{t-1}(X, A) \xi_S(Z, \eta_{t-1}) \right. \right.\\
    & \quad \left. \left. + \frac{\partial f(\widetilde{\eta}_t(X))}{\partial G_{t-1}(\cdot, A) }G_{t-1}(X, A) \xi_G(Z, \eta_{t-1}) \right) \Big\vert X\right] \\
   = &f(\widetilde{\eta}_t(X)) + \frac{\partial f}{\partial \pi}(\widetilde{\eta}_t(X)) (\pi(X) - \pi(X)) \\
    & - \frac{\pi(X)}{\pi(X)} \left(\frac{\partial f(\widetilde{\eta}_t(X))}{\partial S_{t-1}(\cdot, 1) } S_{t-1}(X, 1) \E[\xi_S(Z, \eta_{t-1}) \mid X, A = 1] \right.\\
    & \left. + \frac{\partial f(\widetilde{\eta}_t(X))}{\partial G_{t-1}(\cdot, 1) }G_{t-1}(X, 1) \E[\xi_G(Z, \eta_{t-1}) \mid X, A = 1] \right) \\
    & - \frac{1-\pi(X)}{1-\pi(X)} \left(\frac{\partial f(\widetilde{\eta}_t(X))}{\partial S_{t-1}(\cdot, 0) } S_{t-1}(X, 0) \E[\xi_S(Z, \eta_{t-1}) \mid X, A = 0] \right.\\
    & \left. + \frac{\partial f(\widetilde{\eta}_t(X))}{\partial G_{t-1}(\cdot, 0) }G_{t-1}(X, 0) \E[\xi_G(Z, \eta_{t-1}) \mid X, A = 0] \right)\\
 \overset{(\ast)}{=} & f(\widetilde{\eta}_t(X)),
\end{align}
where $(\ast)$ follows from Lemma~\ref{lem:app_eif_component_zero}.
Hence,
\begin{align}
    \E\left[\rho(Z, \eta_t)\left(\tau_t(X) - g(X)\right)^2\right] 
    &= \E\left[\E\left[\rho(Z, \eta_t)\left(\tau_t(X) - g(X)\right)^2\mid X \right]\right] \\
    &= \E\left[\E\left[f(\widetilde{\eta}_t(X))\left(\tau_t(X) - g(X)\right)^2\mid X \right]\right] \\
    &= \E\left[f(\widetilde{\eta}_t(X))\left(\tau_t(X) - g(X) \right)^2\right].
\end{align}

Putting everything together, we obtain
\begin{align}
\mathcal{L}_{f}(g, \eta_t) &= \frac{1}{\E[f(\widetilde{\eta}_t(X))]} \left(
\E\left[\rho(Z, \eta_t)\left(\varphi(Z, \eta_t) - \tau_t(X)\right)^2\right] \right.\\
& \left.\quad  + 2 \E\left[\rho(Z, \eta_t)\left(\varphi(Z, \eta_t) - \tau_t(X)\right)\tau_t(X)\right]\right)\\
& \quad + \frac{1}{\E[f(\widetilde{\eta}_t(X))]} \E\left[f(\widetilde{\eta}_t(X))\left(\tau_t(X) - g(X) \right)^2\right],
\end{align}
which proves the claim because the first two summands do not depend on $g$ and do not affect the minimization.
\end{proof}

\clearpage

\section{Implementation details}\label{app:implementation}

\subsection{Data generation}

\textbf{Synthetic data generation:} We generated two different scenarios for our experiments on synthetic data, from which we each generated four different datasets with full overlap, low treatment overlap, low censoring overlap, and low survival overlap, respectively. For \textbf{Scenario~1}, we sample a one-dimensional confounder from a standard normal distribution and set $T=5$. Then we generate the propensities as well as the censoring and survival hazard for the full overlap setting as
\begin{align}
    \pi(x) &= 0.5\sigma(x) + 0.2\sigma(-x)\\
    \lambda_t^G(x,a) &= 0.5\sigma(x+t)\\
    \lambda_t^S(x,a) &= 0.5\sigma(x-t),
\end{align}
where $\sigma$ denotes the sigmoid function. For the low overlap settings, we then replace $\pi(x), \lambda_t^G(x,a)$ and $\lambda_t^S(x,a)$ by
\begin{align}
    \pi(x) &= \sigma(2x)\\
    \lambda_t^G(x,a) &= \sigma(1.5(x+t))\\
    \lambda_t^S(x,a) &= \sigma(-ax/(t+1)),
\end{align}
respectively.

For the more difficult \textbf{Scenario~2}, we follow \cite{Curth.2021b} and sample a ten-dimensional standard normal confounder. In this scenario, the full overlap setting with $T=30$ is generated as
\begin{align}
    \pi(\mathbf{x}) &= \sigma\Big(\sum(\mathbf{x})\Big) \\
    \lambda_t^G(\mathbf{x},a) &= 0\\
    \lambda_t^S(\mathbf{x},a) &= 
    \begin{cases}
        0.1\sigma\Big(-0.5\Big(\sum \textbf{x}\Big)^2\Big), \quad t\leq 10,\\
        0.1\sigma\Big(10\Big(\sum \textbf{x}\Big)^2\Big), \quad t> 10.
    \end{cases}
\end{align}
To generate the different low-overlap settings, we then update the functions by
\begin{align}
    \pi(\mathbf{x}) &= \sigma(3x_0) \\
    \lambda_t^G(\mathbf{x},a) &= 0.1\sigma\Big(10\Big(\sum \textbf{x}\Big) + at\Big)\\
    \lambda_t^S(\mathbf{x},a) &= 
    \begin{cases}
        0.1\sigma\Big(-0.5\Big(\sum \textbf{x}\Big)^2 - a(0.5 + \mathbbm{1}\{\sum \textbf{x} \geq 0\})\Big), \quad t\leq 10,\\
        0.1\sigma\Big(10\Big(\sum \textbf{x}\Big)^2 - a(0.5 + \mathbbm{1}\{\sum \textbf{x} \geq 0\})\Big), \quad t> 10,
    \end{cases}
\end{align}
respectively. From all datasets, we generate 30000 train samples with a train-validation split of 0.6 and 3000 test samples. For Scenario~1, we evaluate our model on time steps 0 to 5, and for Scenario~2 on time steps 0,5,10, and 15.

\textbf{Twins data:}
For our real-world case study, we employ the Twins dataset as in \cite{Louizos.2017}. The dataset considers the birth weight of 11984 pairs of twins born in the USA between 1989 and 1991 with respect to mortality in the first year of life. Treatment $a = 1$ corresponds to being born the heavier twin. The dataset further contains 46 confounders associated with the parents, the pregnancy, and the birth. For a detailed description of the dataset, see \cite{Louizos.2017}.

\subsection{Implementation:} 

All experiments were run in Python and run on an AMD Ryzen 7 PRO 6850U 2.70 GHz CPU with eight cores and 32GB RAM. Our experiments can be easily computed on standard computing resources. We provide our code at \url{https://anonymous.4open.science/r/OrthoSurvLearners_anonymous-EEB1}

\textbf{Model architecture and parameters:}
Throughout our experiments, we instantiate all models with the \emph{same} four-layer neural network architectures and hyperparameters. This allows us to assess the effect of our proposed weighting scheme, as differences in performance can be merely attributed to the different orthogonal loss functions for training the second-stage model. 

For the synthetic datasets, our propensitiy networks consist of 20 hidden neurons per layer and are trained over 10 epochs with batch size 64, learning rate 0.0001 dropout factor 0.1. Our hazard networks were trained without droput across 40 epochs and with batch size 256. Finally, the second-stage models, consisting of 64 neurons per layer, were trained without droput across 30 epochs with batch size 64 and learning rate 0.00001.

For the real-world dataset, we adapted the training of the propensity network to 20 epochs and the training of the hazard networks to 120 epochs. We increased the hidden dimension of the second stage model to 32 neurons per layer and wich was trained over 300 epochs with learning rate 0.000001. All other parameter specifications remained as for the synthetic datasets.

\clearpage

\section{Discussion}\label{app:discussion}

\textbf{Limitations.} We observed that appropriately targeted weighted learners achieve better estimation performance in terms of PEHE and improve convergence speed. However, in practice, we recommend carefully assessing the necessary weighting before applying our toolbox, as inappropriate weighting, i.e., overlap weighting even if there is sufficient overlap, can significantly slow down convergence. Furthermore, we note that \emph{complete} overlap violations can still lead to unstable training and result in a high variance of the estimate, as is common with weighted orthogonal learners.

\textbf{Broader impact.}  Our toolbox has a crucial impact on HTE estimation in personalized medicine. Time-to-event data is common in medical settings, but frequently suffers from censoring-induced censoring- and survival overlap violations. Our toolbox offers a way to ensure reliable and stable treatment effect estimation in such settings.

\textbf{Conclusion.} We proposed a toolbox for constructing custom-weighted orthogonal survival learners to estimate HTEs from time-to-event data. Our learners can be constructed in a model-agnostic way, are semi-parametrically efficient, and ensure stable training in the presence of treatment, censoring, or survival overlap violations. As a result, our work makes an important step towards reliable causal effect estimation in survival settings.

\end{document}